% -definition of action and perception agnostic with respect to entity concept. Use arbitrary entity sets. discuss some problems (no-interpenetration), challenges (how to show that this is a reasonable definition? well, we have the PA loop specialization, but do we have strict generalization of the PA loop? not yes sure) and maybe some examples? e.g. the glider entity set and all its perceptions and actions and show how the actions are only one step ones and there are no two step ones so there is also no non-heternomy beyond one step. 
% speculate about goal-directedness via strong non- trivial info-closure and mention the problems with this approach (nathaniel's 
% 
% 
% 
% 
% 
% 
% -workshop abstract?
% 
% -different approach to agency based on entity sets. (entity first approach / relies on notions of identity over time and space)
% -entity sets can be obtained via autopoiesis-like considerations (Beer), via complete local integration (me), spatiotemporal causal power (Hoel), emergent coarse graining (Balduzzi) really?, spatiotemporal causal states?, etc...
% -once given entity sets we can define action and perception compatible with the perception-action loop
% -ignore statistical physics for the moment, make few assumptions on the dynamics, focus on concepts and rigour more than practicality.  
% -question of true generalization?
% -goal-directedness?
% 

\documentclass[letterpaper]{article}

\usepackage{natbib,alifeconf}  %% The order is important

% *****************
%  Requirements:
% *****************
%
% - All pages sized consistently at 8.5 x 11 inches (US letter size).
% - PDF length <= 8 pages for full papers, <=2 pages for extended
%    abstracts.
% - Abstract length <= 250 words.
% - No visible crop marks.
% - Images at no greater than 300 dpi, scaled at 100%.
% - Embedded open type fonts only.
% - All layers flattened.
% - No attachments.
% - All desired links active in the files.

% Note that the PDF file must not exceed 5 MB if it is to be indexed
% by Google Scholar. Additional information about Google Scholar
% can be found here:
% http://www.google.com/intl/en/scholar/inclusion.html.

% If your system does not generate letter format documents by default,
% you can use the following workflow:
% latex example
% bibtex example
% latex example ; latex example
% dvips -o example.ps -t letterSize example.dvi
% ps2pdf example.ps example.pdf

% For pdflatex users:
% The alifeconf style file loads the "graphicx" package, and
% this may lead some users of pdflatex to experience problems.
% These can be fixed by editing the alifeconf.sty file to specify:
% \usepackage[pdftex]{graphicx}
%   instead of
% \usepackage{graphicx}.
% The PDF output generated by pdflatex should match the required
% specifications and obviously the dvips and ps2pdf steps become
% unnecessary.

% Note:  Some laser printers have a serious problem printing TeX
% output. The use of ps type I fonts should avoid this problem.
\usepackage[utf8]{inputenc}
\usepackage{natbib}

\usepackage[linktocpage=true]{hyperref}
\hypersetup{
  colorlinks   = true,    % Colours links instead of ugly boxes
  urlcolor     = blue,    % Colour for external hyperlinks
  linkcolor    = red,    % Colour of internal links
  citecolor    = red      % Colour of citations
}
\usepackage{amsthm}
\usepackage{amsmath,amssymb, amstext}
\usepackage{mathabx}
\usepackage{bm}
\usepackage[font={small,it}]{caption,subcaption}
\usepackage{graphicx}
\usepackage{tikz}
\usetikzlibrary{arrows,fit,matrix,backgrounds,decorations.pathreplacing,calc}
\usetikzlibrary{positioning}
\usepackage{bm} 

\setcounter{tocdepth}{4}
\setcounter{secnumdepth}{4}

% phantom wrapper thing (with \widthof problem workaround):
% \makeatletter
% % Stuff for calc compatiability.
% \let\real=\pgfmath@calc@real
% \let\minof=\pgfmath@calc@minof
% \let\maxof=\pgfmath@calc@maxof
% \let\ratio=\pgfmath@calc@ratio
% \let\widthof=\pgfmath@calc@widthof
% \let\heightof=\pgfmath@calc@heightof
% \let\depthof=\pgfmath@calc@depthof
% \makeatother
\newbox\mybox
% \usepackage{xparse}
% \newcommand{\printtobox}[2]{%
%   \makebox[\widthof{#1}]{#2}%
% }
% \newcommand{\printtowidth}[2]{%
%   \makebox[1cm]{#2}%
% }
% definitions
% \newtheorem{mydef}{Definition}
% \newtheorem{mythm}{Theorem}
% Typography
\usepackage{enumitem}

\newlist{thmlist}{enumerate}{1}
\setlist[thmlist]{label=(\roman{thmlisti}),noitemsep}

% Math, Operators and Theorems
\usepackage{thmtools}

\declaretheorem[
    name=Definition]{mydef}

\declaretheorem[
    name=Theorem]{thm}
% \declaretheorem[
%     name=Theorem,
%     numberwithin=section]{thm}

% References
\usepackage[capitalize]{cleveref}
\captionsetup[subfigure]{subrefformat=simple,labelformat=simple}

\crefname{paragraph}{paragraph}{paragraphs}
\Crefname{paragraph}{Paragraph}{Paragraphs}

\makeatletter

\Crefname{thm}{Theorem}{Theorems}
\Crefname{lem}{Lemma}{Lemmas}
\Crefname{mydef}{Definition}{Definitions}
\Crefname{listthm}{Theorem}{Theorems}
\Crefname{listlem}{Lemma}{Lemmas}
\Crefname{listdef}{Definition}{Definitions}

\addtotheorempostheadhook[thm]{\crefalias{thmlisti}{listthm}}
\addtotheorempostheadhook[lem]{\crefalias{thmlisti}{listlem}}
\addtotheorempostheadhook[mydef]{\crefalias{thmlisti}{listdef}}

\creflabelformat{listthm}{#2\thethm.#1#3}
\creflabelformat{listlem}{#2\thethm.#1#3}
\creflabelformat{listdef}{#2\thethm.#1#3}

%tabular row distance increas

%tikz styles
\tikzset{% 
    myarrow/.style={%
        thick,->,>=stealth',shorten >=2pt
    }
}

% %section referencing
% \newcommand{\sect}[1]{%
%   Section \ref{#1}%
% }

% % solving hyperref problem with two links for one citation in natbib
% \usepackage{etoolbox}
% 
% \makeatletter
% 
% \pretocmd{\NAT@citex}{%
%   \let\NAT@hyper@\NAT@hyper@citex
%   \def\NAT@postnote{#2}%
%   \setcounter{NAT@total@cites}{0}%
%   \setcounter{NAT@count@cites}{0}%
%   \forcsvlist{\stepcounter{NAT@total@cites}\@gobble}{#3}}{}{}
% \newcounter{NAT@total@cites}
% \newcounter{NAT@count@cites}
% \def\NAT@postnote{}
% 
% % include postnote and \citet closing bracket in hyperlink
% \def\NAT@hyper@citex#1{%
%   \stepcounter{NAT@count@cites}%
%   \hyper@natlinkstart{\@citeb\@extra@b@citeb}#1%
%   \ifnumequal{\value{NAT@count@cites}}{\value{NAT@total@cites}}
%     {\ifNAT@swa\else\if*\NAT@postnote*\else%
%      \NAT@cmt\NAT@postnote\global\def\NAT@postnote{}\fi\fi}{}%
%   \ifNAT@swa\else\if\relax\NAT@date\relax
%   \else\NAT@@close\global\let\NAT@nm\@empty\fi\fi% avoid compact citations
%   \hyper@natlinkend}
% \renewcommand\hyper@natlinkbreak[2]{#1}
% 
% % avoid extraneous postnotes, closing brackets
% \patchcmd{\NAT@citex}
%   {\ifNAT@swa\else\if*#2*\else\NAT@cmt#2\fi
%    \if\relax\NAT@date\relax\else\NAT@@close\fi\fi}{}{}{}
% \patchcmd{\NAT@citex}
%   {\if\relax\NAT@date\relax\NAT@def@citea\else\NAT@def@citea@close\fi}
%   {\if\relax\NAT@date\relax\NAT@def@citea\else\NAT@def@citea@space\fi}{}{}
% 
% \makeatother

\bmdefine\bcolon{:}

\DeclareMathOperator{\pa}{pa}

\newcommand{\bs}{\setminus}
\newcommand{\Nplus}{\mathbb{N}^+}

%the set of distributions, the simplex

%and the independent ones

\newcommand{\X}{\mathcal{X}}

\newcommand{\E}{\mathcal{E}}
\newcommand{\M}{\mathcal{M}}

\renewcommand{\S}{\mathcal{S}}

\newcommand{\Xv}{\{X_i\}_{i \in V}}
\newcommand{\Xw}{\{X_i\}_{i \in W}}

\newcommand{\At}{\{A_t\}_{t \in T}}

\newcommand{\Et}{\{E_t\}_{t \in T}}
\newcommand{\Mt}{\{M_t\}_{t \in T}}
\newcommand{\St}{\{S_t\}_{t \in T}}

\newcommand{\pet}{{\preceq t}}

\newcommand{\Ent}{\mathfrak{E}}

\newcommand{\perstp}{\mathfrak{S}}

\newcommand{\ext}{w}
\newcommand{\pal}{{PA}}

\DeclareMathOperator{\HS}{H}

\newcommand{\st}{{\succ t}}

\title{Action and perception for spatiotemporal patterns}
\author{Martin Biehl$^{1,2}$ \and Daniel Polani $^{2}$ \\
\mbox{}\\
$^1$Araya, Tokyo, 151-0001\\
$^2$University of Hertfordshire, Hatfield, AL10 9AB \\
martin@araya.org} % email of corresponding author

% For several authors from the same institution use the same number to
% refer to one address.
%
% If the names do not fit well on one line use
%         Author 1, Author 2 ... \\ {\Large\bf Author n} ...\\ ...
%
% If the title and author information do not fit in the area
% allocated, place \setlength\titlebox{<new height>} after the
% \documentclass line where <new height> is 2.25in

\begin{document}
\setlist{noitemsep,topsep=1pt,parsep=0pt,partopsep=0pt}
\maketitle

\begin{abstract}
This is a contribution to the formalization of the concept of \textit{agents} in multivariate Markov chains. Agents are commonly defined as entities that act, perceive, and are goal-directed. In a multivariate Markov chain (e.g.\ a cellular automaton) the transition matrix completely determines the dynamics. This seems to contradict the possibility of acting entities within such a system. Here we present definitions of actions and perceptions within multivariate Markov chains based on \textit{entity-sets}. Entity-sets represent a largely independent choice of a set of spatiotemporal patterns that are considered as all the entities within the Markov chain. For example, the entity-set can be chosen according to operational closure conditions or complete specific integration. Importantly, the perception-action loop also induces an entity-set and is a multivariate Markov chain. We then show that our definition of actions leads to non-heteronomy and that of perceptions specialize to the usual concept of perception in the perception-action loop. 
% We also note the equivalence of our formal definition of perceptions to the coginitive domain of spatiotemporal patterns.  
\end{abstract}

\section{Introduction}
The perception-action loop (PA-loop) has been used to formalize, in mostly information theoretic terms, various properties associated to agents. These include empowerment \cite{klyubin_empowerment_2005}, autonomy \citep{bertschinger_autonomy_2008}, decisions \citep{tishby_information_2011}, and embodiment \citep{zahedi_quantifying_2013}. 
% and individuality \citep{krakauer_information_2014}. 

In the literature agents are usually seen as entities that act, perceive, and are in some way goal-directed \citep[cmp.][]{barandiaran_defining_2009}.

The PA-loop assumes that the entities that make up agents as well as their environments can be captured by interacting stochastic processes. This is a convenient assumption since actions and perceptions can then be easily identified with the interactions (see \cref{sec:paloop}). Further requirements (e.g.\ autonomy) can then be introduced to distinguish stochastic processes that actually constitute agents.  

% The PA-loop assumes that the agent and its environment can be captured by interacting stochastic processes if additional conditions (e.g.\ a form of autonomy) are met. 

It has not been formally established whether the assumption that the set of entities that contains agents can be represented by stochastic processes is justified. We have argued in previous work, that (naively) using stochastic processes to capture entities within a given multivariate Markov chain (for example cellular automata like the Game of Life) fails to account for essential proporties of agents irrespective of chosen additional conditions. 
% like motility and counterfactual variation \citet{biehl_towards_2016,biehl_formal_2017}. 
Instead, we have argued for the use of \textit{spatiotemporal patterns} \citep[STPs, previously employed by][]{beer_cognitive_2014,beer_characterizing_2014} to represent entities. 
% The immediate advantage is that STP based entities can be used directly to formally capture structures like gliders in the game of life, spots in reaction diffusion systems that react to concentration differences in their surroundings 
The immediate advantage is that STPs 
% based entities can be used directly to formally capture
are a superset of structures like gliders in the game of life, spots in reaction diffusion systems 
% that react to concentration differences in their environment
\citep{virgo_thermodynamics_2011,froese_motility_2014,bartlett_emergence_2015},
% or compete for free energy 
% \citep{
% bartlett_emergence_2015}
and particle based systems exhibiting individuation into multi-particle ``cells'' 
\citep{schmickl_how_2016}. Formally capturing these structures then becomes a matter of selecting the according subsets of STPs i.e.\ the entity-sets (see below).

A disadvantage of the STP based entities is that they lack the formal construction/interpretation of actions and perceptions that the PA-loop provides. As far as we know, no formal definitions of actions and perception exists for STP based entities. The first contribution of this paper are proposals of such formal definitions called \textit{entity action} (\cref{sec:actions}) and \textit{entity perception} (\cref{sec:perceptions}).   

The second contribution is a formal connection between the PA-loop and our STP-based entity actions and perceptions (\cref{sec:palooprelation}). This connection is achieved via the notion of \textit{entity-sets}. This is just the set of those STPs in a multivariate Markov chain that are considered as entities according to an independently specified criterion (e.g.\ organizational closure \citep{beer_characterizing_2014} or complete local integration \citep{biehl_specific_2017,biehl_towards_2016}). Importantly, we can quite naturally identify an entity-set for the PA-loop and use our definitions of entity action and entity perception for this entity-set. The result is that our entity perception coincides with the standard notion of perceptions in the PA-loop and that entity actions are a necessary and sufficient condition for non-heteronomy. Non-heteronomy (i.e.\ not being determined by the environment) was proposed in \citet{bertschinger_autonomy_2008} as part of an information theoretic measure of autonomy.   

The most closely related work is that of \citet{beer_cognitive_2014}. Apart from a generalization to stochastic systems our set of entity perceptions seems to be a straightforward (but surprisingingly tedious) formalization of the cognitive domain of STPs described for the glider in game of life. 
% Note that the entity-set in this work is obtained via the notion of organizational closure. Here we employ the ``cognitive domain'' independently of this choice of entity-set. 
We use ``perceptions'' instead of ``cognitive domain'' only because our motivation came more from the PA-loop and unlike \citet{beer_cognitive_2014} not directly from autopoiesis. 
% However, contributing to the formalization of autopoiesis would be an additional positive side effect of our work. 
Concerning the entity actions we deviate from \citet{beer_cognitive_2014} by requiring more from an action than just the continuation of the entity. 
% This will become clearer later.

% The concept of entity actions underlying our definitions is compatible with the \textit{as-if} and \textit{thin} concept of agents as described by \cite{mcgregor_bayesian_2017}. This means that the actions do not have to be we make no assumptions about the conscioussness 

% The concept of entity actions may seem like it is a kind of ``as-if'' definition \citep{mcgregor_bayesian_2017} of actions. This would mean that whether the actions are internally or externally triggered is ignored by the definition. However, since everything external is fixed, the reason for the action can only be internal. So, while ``looking like'' an action for any observer motivates our definition, the fixation of the entire environment actually forces the ``cause'' of the action to be internal to the entity. At the same time entity-action are definitely ``thin'' \citep{mcgregor_bayesian_2017} which means they are agnostic with respect to the willfulness or concsioussness of the entity. 

We note that \citet{ikegami_uncertainty_1998} propose to use possible/compatible counterfactual trajectories of game players as signs of autonomy. 
% This idea is similar to ours. 
We construct the capability to act from the counterfactual trajectories and find that they imply non-heteronomy which is a related to autonomy. 
\section{Notation}
We restrict ourselves to finite, time-discrete, multivariate Markov chains. These are unrolled in time and can then be formally described as Bayesian networks (BNs). We index the random variables in the BN via the index set $V = J \times T$ where $T$ is the set of all timesteps and $J$ is the set of (spatial) degrees of freedom. If $i \in V$ is an index we also write $i = (j,t) \in J \times T$ where convenient. The BN is then a set $\Xv$ of random variables together with a set of edges determining the parents $\pa(i)=\pa(j,t)$ of each node $i \in V$, and associated mechanisms $p_{(j,t)}(x_{j,t}|x_{\pa(j,t)})$. We assume that the parents of any node are a subset of the nodes at the previous timestep $\pa(j,t)\subseteq (J,t):=\{(j,t)|j \in J\}$. We write $X_A:=(X_i)_{i \in A}$ for the joint random variable consisting of random variables indexed by elements of $A \subseteq V$. We also sometimes write $A_t:=\{(j,t)| j \in A\}$ for the elements in $A$ that correspond to indices at timestep $t$. We refer to $A_t$ as the \textit{time-slice} of $A$ at $t$. The state space of random variable $X_i$ is denoted as $\X_i$ and the specific values are denoted by lower case letters $x_i,y_i,\hat{x}_i,...\in \X_i$. For joint random variables we write $\X_A:=\prod_{i \in A} \X_i$ and $x_A, y_A, \hat{x}_A,...\in \X_A$.

A \textit{spatiotemporal pattern} (STP) is a value $x_A \in \X_A$ of a joint random variable $X_A$ with $A \subseteq V$. Since $A \subseteq V$ is an arbitrary subset of $V$ a STP can specify the values of random variables at multiple timesteps and multiple spatial locations. The set of all STPs is: 
% $\SP(\Xv):= \bigcup_{A \subseteq V} \X_A$
$\bigcup_{A \subseteq V} \X_A=\{x_A \in \X_A|A \subseteq V\}$
. It is important to envision the difference between the set of all STPs and the set of all subsets of random variables of $\Xv$. The latter is isomorphic to the power set of $V$ and can be written as $\bigcup_{A \subseteq V} \{X_A\} = \{X_A|A \subseteq V\}$ and is a set of random variables not a set of \textit{values} of random variables. 

A \textit{trajectory} is a STP $x_V$ that occupies all random variables in the BN $\Xv$. We then say that a STP $y_A \in \X_A$ \textit{occurs within trajectory} $x_V$ if $y_A = x_A$.  

An \textit{entity-set} $\Ent(\Xv)$ is a subset of all STPs. One choice would be to use the entire set of STPs as the entity set. Other choices include using organizational closure conditions like in \citet{beer_cognitive_2014} or the complete specific integration criterion \citep{biehl_towards_2016,biehl_specific_2017}. The following definitions and theorems all assume that $\Xv$ is a multivariate Markov chain with $V=J\times T$ and $\Ent$ is a given entity-set.

% This would be akin to universal mereological composition in philosophical discussions of entities \citep{sep-location-mereology}. 

\section{Perception-action loop}
\label{sec:paloop}
Given two interacting stochastic processes (e.g.\ \cref{fig:paloop}) we can always extract random stochastic process that explicitly represent the interactions. If we see one of the processes as the agent's memory process $\Mt$ and the other as the environment process $\Et$ then these extracted random variables can be seen as the perceptions and actions of the agent. Perceptions $\St$ then capture exactly all the influence of the environment on the agent and actions $\At$ capture the influence of the agent on the environment. 
% This is also the point of view taken for agent-environment system in \citet{beer_dynamical_1995}.  
% Let us first consider a simple PA-loop consisting of agent and environment only. We assume here that there are no instantaneous interactions between agent and environment. This is particularly suitable for the situation where we relate our conception of agents to that underlying the PA-loop in \cref{sec:palooprelation}. In the PA-loop the agent memory is represented by a sequence of random variables $\Mt$, the environment state by $\Et$. At each time-step the agent memory $M_t$ is influenced by the last memory state $m_{t-1}$ and the last environment state $M_{t-1}$. Conversely the environment state $E_t$ is influenced by $E_{t-1}$ and $M_{t-1}$. 
% and the environment state by the  
% See \cref{fig:paloop} for the BN.
\begin{figure}
\begin{center}
  \begin{tikzpicture}
% \setbox\mybox=\hbox{\pgfinterruptpicture$E_3$\endpgfinterruptpicture}

  \matrix (m) [matrix of math nodes, nodes in empty cells,row sep=1.5cm,column sep=1.5cm]
  {
     E_0 &      E_1 &      E_2 & \vphantom{E_3} \\
%          & S_0 &     & S_1 &     & \vphantom{S_2} \\
%          & A_0 &     & A_1 &     & \vphantom{A_2} \\
     M_0 &      M_1 &      M_2 & \vphantom{M_3} \\
   };
  \path[myarrow]
    (m-1-1) edge node {} (m-2-2)
            edge node {} (m-1-2)
    (m-2-1) edge node {} (m-2-2)
            edge node {} (m-1-2)
    
%     (m-3-2) edge node {} (m-1-3)
%     (m-4-1) edge node {} (m-3-2)
%             edge node {} (m-4-3)

    (m-1-2) edge node {} (m-2-3)
            edge node {} (m-1-3)
    (m-2-2) edge node {} (m-2-3)
            edge node {} (m-1-3)
%     (m-3-4) edge node {} (m-1-5)
%     (m-4-3) edge node {} (m-3-4)
%             edge node {} (m-4-5)
            
    (m-1-3) edge[-,dotted] node {} (m-2-4)
            edge[-,dotted] node {} (m-1-4)
    (m-2-3) edge[-,dotted] node {} (m-2-4)
            edge[-,dotted] node {} (m-1-4)

            %     (m-2-4) edge node {} (m-4-5)
%     (m-3-4) edge node {} (m-1-5)
%     (m-4-5) edge[-,dotted] node {} (m-3-6)
%             edge[-,dotted] node {} (m-4-6)
%     
    ;            
\end{tikzpicture}
  \caption{First timesteps of the PA-loop BN. 
The processes represent environment $\Et$, 
and agent memory $\Mt$. 
% Note that the there are direct causal relations between those processes, i.e. they are not high-level processes on top of an underlying process $\Xt$.
}
  \label{fig:paloop}
\end{center}
\end{figure}
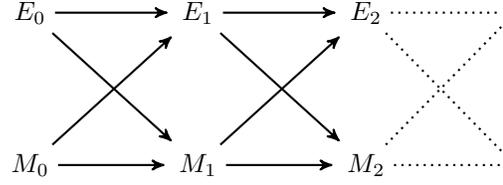
This means we introduce another BN containing two more processes, the action process $\At$ and the sensor process $\St$. The result of this extraction is an extended BN (\cref{fig:expaloop}) with identical joint probability distribution over the two initial stochastic processes $\Mt$ and $\Et$. This perception action loop is used for example in \citet{bertschinger_autonomy_2008}.

The idea behind the extraction of perceptions $\St$ (and conversely actions $\At$) is to partition the state space $\E_t$ of the environment at $t$ into blocks having identical influence on the next memory state $M_{t+1}$. These blocks are then the possible perceptions i.e.\ the states of $S_t$. Formally:

\begin{mydef}
\label{def:paloopperception}
  For each time $t \in T$ and $\hat{e}_t, \bar{e}_t \in \E_t$ let 
  \begin{align}
  \begin{split}
\label{eq:perceptionequiv}
     &\hat{e}_t \equiv_{\epsilon_t} \bar{e}_t \\ 
     &\Leftrightarrow \forall m_{t+1} \in \M_{t+1}, m_t \in \M_t :\\ 
     &\phantom{\Leftrightarrow}\;\; p_{M_{t+1}}(m_{t+1} |  m_t,\hat{e}_t) = p_{M_{t+1}}(m_{t+1} | m_t, \bar{e}_t).
  \end{split}   
  \end{align} 
  Then:
  \begin{thmlist}
  \item The \textit{sensor partition\footnote{The construction of the sensor partition is not new. It is also used for example in \citet{balduzzi_detecting_2011} to obtain coarser states (alphabet) of joint random variables. 
%   It is also similar to the construction of causal states \citep{shalizi_causal_2001}. Causal states are usually a partition of pasts $x_{\preceq t}$ according to equal future morphs $p(X_{t \prec}|x_{\preceq t})$. Here we use equal transition probabilities (``transition morphs'') of another process to partition the current states. 
  The authors thank Benjamin Heuer for originally pointing them to this construction.} 
  $\epsilon_t$} is then defined as the set of equivalence classes of the equivalence relation $\equiv_{\epsilon_t}$.
  \item The \textit{set of sensor values} is defined as $\S_t := \epsilon_t$ and an element $s_t \in \S_t$ (which is also a block in $\epsilon_t$ is called a \textit{perception} of a \textit{sensor value}.
%   \item The \textit{sensor function $f_{S_t}:\E_t \rightarrow \S_t$} is defined by
%   \begin{equation}
%     f_{S_t}(e_t)=e_t/\epsilon_t,
%   \end{equation} 
%   where $e_t/\epsilon_t$ is the block in $\epsilon_t$ containing $e_t$ (which is also a sensor value).
  \end{thmlist}
\end{mydef}

% The action at a time $t$ is a block in a partition $\mu_t$ of the state space $\M_t$ of the agent at $t$. These blocks are composed out of agent states that have the same effect on the environment's transitions from $E_t$ to $E_{t+1}$. Formally, 
% we define the partition $\mu_t$ via the equivalence relation $\equiv_{\mu_t}$ relating the elements of each block in $\mu_t$. 
% 
% \begin{mydef}
%   Given a PA-loop $\Xv$. For each time $t \in T$ and $m^1_t, m^2_t \in \M_t$ let 
%   \begin{equation}
%      m^1_t \equiv_{\mu_t} m^2_t \Leftrightarrow \forall e_{t+1} \in \E_{t+1}, e_t \in \E_t: p_{E_{t+1}}(e_{t+1} | m^1_t, e_t) = p_{E_{t+1}}(e_{t+1} | m^2_t, e_t).
%   \end{equation} 
%   Then:
%   \begin{thmlist}
%   \item The \textit{action partition $\mu_t$ }is then defined as the set of equivalence classes of the equivalence relation $\equiv_{\mu_t}$.
%   \item The \textit{set of actions} is defined as $\A_t := \mu_t$ and an element $a_t \in \A_t$ (which is also a block in $\mu_t$ is called an \textit{action}.
%   \item The \textit{action function $f_{A_t}:\M_t \rightarrow \A_t$} is defined by
%   \begin{equation}
%     f_{A_t}(m_t)=m_t/\mu_t,
%   \end{equation} 
%   where $m_t/\mu_t$ is the block in $\mu_t$ containing $m_t$ (which is also an action).
%   \end{thmlist}
% \end{mydef}
% Remark:
% \begin{itemize}
%   \item 
%   

% \end{itemize}

In the symmetrical way we define actions via a partition of $\M_t$ and arrive at the extended PA-loop of \cref{fig:expaloop}. It is then straightforward to prove the following theorem: 

% With these definitions we can extend the BN of the PA-loop by the action process $\At$ and the sensor process $\St$ without altering the probability distribution $p_V$ over all random variables $\Xv$ in the orginal PA-loop of \cref{def:paloop}. 
% 
% First, we define the extended PA-loop. For the BN see \cref{fig:expaloop}.
\begin{figure}
\begin{center}
  \begin{tikzpicture}
% \setbox\mybox=\hbox{\pgfinterruptpicture$E_3$\endpgfinterruptpicture}

  \matrix (m) [matrix of math nodes, nodes in empty cells,row sep=.5cm,column sep=.5cm]
  {
     E_0 &     & E_1 &     & E_2 & \vphantom{E_3} \\
         & S_0 &     & S_1 &     & \vphantom{S_2} \\
         & A_0 &     & A_1 &     & \vphantom{A_2} \\
     M_0 &     & M_1 &     & M_2 & \vphantom{M_3} \\
   };
  \path[myarrow]
    (m-1-1) edge node {} (m-2-2)
            edge node {} (m-1-3)
    (m-2-2) edge node {} (m-4-3)
    (m-3-2) edge node {} (m-1-3)
    (m-4-1) edge node {} (m-3-2)
            edge node {} (m-4-3)

    (m-1-3) edge node {} (m-2-4)
            edge node {} (m-1-5)
    (m-2-4) edge node {} (m-4-5)
    (m-3-4) edge node {} (m-1-5)
    (m-4-3) edge node {} (m-3-4)
            edge node {} (m-4-5)
            
    (m-1-5) edge[-,dotted] node {} (m-2-6)
            edge[-,dotted] node {} (m-1-6)
%     (m-2-4) edge node {} (m-4-5)
%     (m-3-4) edge node {} (m-1-5)
    (m-4-5) edge[-,dotted] node {} (m-3-6)
            edge[-,dotted] node {} (m-4-6)
    
    ;            
\end{tikzpicture}
  \caption{First time-steps of the BN of the extended PA-loop. The processes $\At$ and $\St$ mediate all interactions between $\Mt$ and $\Et$ without changing the probability distributions over the latter (see \cref{thm:paloopext}).}
  \label{fig:expaloop}
\end{center}
\end{figure}
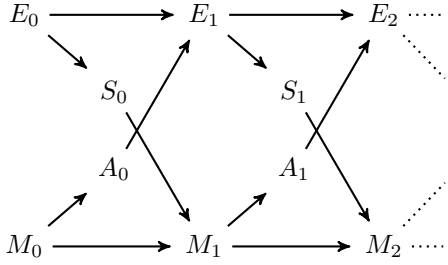

\begin{thm}[Invariant extension theorem]
\label{thm:paloopext}
Given a perception action loop $\Xv =\{M_t,E_t\}_{t \in T}$ and its extended PA-loop $\Xw=\{M_t,A_t,S_t,E_t\}_{t \in \Nplus}$. Let $p_V=p_{M_T,E_T}$ be the probability distribution over the entire perception action loop $\Xv$ and let $p^{\ext}_{M_T,E_T}$ be the marginal probability distribution over the memory and environment process obtained from the probability distribution $p^{\ext}_W$ over the entire extended PA-loop. Then
\begin{equation}
p_{M_T,E_T} = p^{\ext}_{M_T,E_T}.
\end{equation} 
\end{thm}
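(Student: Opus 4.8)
The plan is to compare the two joint distributions through their Bayesian-network factorizations: write $p^{\ext}_W$ as a product of its mechanisms, use that the newly introduced nodes $S_t$ and $A_t$ are \emph{deterministic} functions of $E_t$ and $M_t$ respectively so that marginalizing them out is a substitution rather than a genuine sum, and then read off that what remains is exactly the factorization of the original PA-loop $p_{M_T,E_T}$.

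First I would pin down the two structural facts about the extended construction that the argument uses. (i) By \cref{def:paloopperception} the mechanism of $S_t$ satisfies $p^{\ext}_{S_t}(s_t\mid e_t)=1$ when $s_t$ is the block of $\epsilon_t$ containing $e_t$ — write $[e_t]$ for this block — and $0$ otherwise; symmetrically $p^{\ext}_{A_t}(a_t\mid m_t)=1$ exactly when $a_t$ is the block $[m_t]$ of the action partition containing $m_t$. (ii) The mechanism of $M_{t+1}$ in the extended BN has parents $M_t,S_t$ and is \emph{defined} by $p^{\ext}_{M_{t+1}}(m_{t+1}\mid m_t,s_t):=p_{M_{t+1}}(m_{t+1}\mid m_t,e_t)$ for an arbitrary representative $e_t\in s_t$; this is well posed precisely because \eqref{eq:perceptionequiv} says the right-hand side is constant over the block $s_t$. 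Symmetrically $p^{\ext}_{E_{t+1}}(e_{t+1}\mid e_t,a_t):=p_{E_{t+1}}(e_{t+1}\mid e_t,m_t)$ for any $m_t\in a_t$, and the initial mechanism over $M_0,E_0$ is taken equal to the one in the original PA-loop.

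Then I would carry out the marginalization. Expanding $p^{\ext}_W$ into its BN product and computing
\[
p^{\ext}_{M_T,E_T}(m_T,e_T)=\sum_{s_T,a_T} p^{\ext}_W(m_T,a_T,s_T,e_T),
\]
one notes that each $s_t$ appears in only the two factors $p^{\ext}_{S_t}(s_t\mid e_t)$ and $p^{\ext}_{M_{t+1}}(m_{t+1}\mid m_t,s_t)$, so summing over $s_t$ against the indicator from (i) collapses it to the single value $s_t=[e_t]$; likewise each $a_t$ collapses to $[m_t]$. After these substitutions the product reads $p_{M_0,E_0}(m_0,e_0)\prod_t p^{\ext}_{M_{t+1}}(m_{t+1}\mid m_t,[e_t])\,p^{\ext}_{E_{t+1}}(e_{t+1}\mid e_t,[m_t])$, and by the defining identities in (ii) each extended transition factor equals the corresponding original factor $p_{M_{t+1}}(m_{t+1}\mid m_t,e_t)$, resp.\ $p_{E_{t+1}}(e_{t+1}\mid e_t,m_t)$. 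What is left is exactly the BN factorization of $p_{M_T,E_T}$, which is the claim. (The same reasoning can also be organized as an induction on $t$, establishing $p^{\ext}_{M_{\preceq t},E_{\preceq t}}=p_{M_{\preceq t},E_{\preceq t}}$ one time-slice at a time.)

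The only step with genuine content — and hence the one I would be most careful about — is fact (ii): the whole theorem rests on the observation, built into \cref{def:paloopperception}, that conditioning $M_{t+1}$ on the sensor value $s_t$ carries exactly the same information as conditioning on the full environment state $e_t$, and its action-side mirror. Everything else is bookkeeping with the BN product; there the things to keep straight are the ranges of the time index over which the transition factors and the sums run, and the fact that $S_t$ and $A_t$ occur in only the two factors named above, so that marginalizing them out is literally substitution.
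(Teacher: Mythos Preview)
Your argument is correct and is the natural way to prove the statement: the sensor and action variables are deterministic functions of $E_t$ and $M_t$, so marginalizing them is a substitution, and the defining property of the equivalence classes \eqref{eq:perceptionequiv} guarantees the resulting transition factors coincide with the original ones.

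The paper, however, does not actually give a proof of \cref{thm:paloopext}; it only remarks that the result is ``probably fairly well known'' and refers to \citet{biehl_formal_2017} for an explicit argument. So there is nothing to compare against in the present paper itself. Your write-up is essentially the proof one would expect the cited reference to contain, and your identification of fact~(ii) --- that the extended transition kernel is well defined precisely because of the equivalence in \cref{def:paloopperception} --- as the only step with real content is exactly right.
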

\begin{proof}
  This is probably fairly well known but see \citet{biehl_formal_2017} for an explicit proof.
\end{proof}

% 
% 
% \begin{proof}
% \begin{align}
% \begin{split}   p^{\ext}_{M_T,E_T}(m_T,e_T)&=
%     \sum_{a_T}\sum_{s_T} \prod_{t=1}^{n-1} p^\ext_{M_t}(m_t|m_{t-1},s_{t-1}) p^\ext_{S_{t-1}}(s_{t-1}|e_{t-1}) \\
%    &\phantom{=\sum_{a_T}\sum_{s_T} \prod_{t=1}^{n-1}}p^\ext_{E_t}(e_t|a_{t-1},e_{t-1}) p^\ext_{A_{t-1}}(a_{t-1}|m_{t-1}) p^\ext_{M_0,E_0}(m_0,e_0)  
% \end{split}\\
% \begin{split}  &= \sum_{a_T}\sum_{s_T} \prod_{t=1}^{n-1} p^\ext_{M_t}(m_t|m_{t-1},s_{t-1}) \delta_{f_{S_{t-1}}(e_{t-1})}(s_{t-1}) \\
% &\phantom{=\sum_{a_T}\sum_{s_T} \prod_{t=1}^{n-1}}p^\ext_{E_t}(e_t|a_{t-1},e_{t-1}) \delta_{f_{A_{t-1}}(m_{t-1})}(a_{t-1}) p^\ext_{M_0,E_0}(m_0,e_0) 
% \end{split}\\
% \begin{split}   &= \prod_{t=1}^{n-1} p^\ext_{M_t}(m_t|m_{t-1},f_{S_{t-1}}(e_{t-1})) \\
% &\phantom{=\prod_{t=1}^{n-1}}p^\ext_{E_t}(e_t|f_{A_{t-1}}(m_{t-1}),e_{t-1}) p^\ext_{M_0,E_0}(m_0,e_0)  
% \end{split}
% \\
% \begin{split}   &=\prod_{t=1}^{n-1} p_{M_t}(m_t|m_{t-1},\bar{e}_{t-1} \in f^{-1}_{S_{t-1}}\circ f_{S_{t-1}}(e_{t-1})) \\
% &\phantom{=\prod_{t=1}^{n-1}} p_{E_t}(e_t|\bar{m}_{t-1} \in f^{-1}_{A_{t-1}}\circ f_{A_{t-1}}(m_{t-1}),e_{t-1}) p_{M_0,E_0}(m_0,e_0)   
% \end{split}\\
%    &=p_{M_T,E_T}(m_T,e_T).
% \end{align}
% \end{proof}
% Remarks:
% \begin{itemize}
%   \item 
  This shows that the introduction of action and sensor process in the above way only makes the interactions between agent and environment processes explicit and does not introduce any additional dynamics.
%   Action and sensor processes are not essential to the PA-loop and  They only represent what the environment ``sees'' of the agent and vice versa. 
%   In other words the dynamics of agent and environment do not \textit{require} that the states space $\M_t \times \E_t$ of a time-slice is extended to $\M_t \times \A_t \times \S_t \times E_t$. This insight is also in line with \citet{bertschinger_information_2006} which uses the extended PA-loop and refers to the action and sensor processes as ``channels''.
  The theorem also shows that the sensor process (and conversely the actions) captures all influences from the environment on the agent. Else the dynamics of the original processes could not remain identical. 
%   We use this fact as a starting point for our conception of entity perception in 
  In \cref{sec:perceptions} we want to capture all influences of the environment on a set of STPs / entities instead of on a stochastic process like $\Mt$. This will require a generalization of the perception extraction procedure in \cref{def:paloopperception}.
% \end{itemize}

\section{Entity action}
\label{sec:actions}

We now define a concept of actions for a given entity-set 
% (\cref{def:entset}) 
in a multivariate Markov chain. First we briefly sketch the main ideas behind the definition.
\label{sec:contrast}
Due to our setting of a given multivariate Markov chain that actions have to occur within our concept of actions differs from other approaches.
Paraphrasing \citet{sep-action} only slightly, what distinguishes actions among events is that they do not merely happen to individuals but rather that they are \textit{made to happen by} the individuals. 

% being able to \textit{act} refers to an entity's capability to make things happen. This is in contrast to the entities having things happen to it.   
% 
% Paraphrasing \citet{sep-action} only slightly, an entity has the property of being able to \textit{act} refers to an entity's capability to make things happen as opposed to its only have things happen to it. 
% We might also call such agents ``embedded agents'' however this term is also used in a different sense \citep[e.g.][]{KAELBLING199035}.
This is problematic in our setting where STPs (as entities) take the role of individuals. What ``happens'' in a multivariate Markov chain are the trajectories and the STPs occurring within them. The Markov chain's dynamics are determined by its mechanisms $p_{j,t}$ with $j \in J, t \in T$. These in turn determine (possibly stochastically) what is going to happen at all times anywhere within the chain. 
% All mechanisms at all time-steps are fixed by the definition of the Markov chain and then cannot be altered anymore. If it is desired that mechanisms change over time then this must be decided when defining the Markov chain. 
% Since the occurrence of STPs is an effect of these fixed mechanisms the STP cannot ``make anything occur'' within the chain. Just like the occurrence of any STP up to time $t$ is a consequence of the mechanisms so are the occurrences of STPs in the future of $t$. 
% More formally, given any STP (be it an entity or not) $x_A$, its morph (see \cref{def:morph}) $p_{V\bs A|A}(X_{V \bs A}|x_A)$ is the probability distribution over the rest possible states $\X_{V\bs A}$ of the multivariate Markov chain given that $x_A$ occurs. By definition this morph is determined uniquely by the mechanisms of the chain. 
% of BNs (\cref{def:bnet}) which include the mvmcs 
% This means whatever ``happens'' beyond the STP $x_A$ is already determined when the Markov chain is defined. 

 Therefore, it is impossible that a multivariate Markov chain contains an STP or entity that can make something happen beyond what happens anyway due to the mechanisms. This means we have to explain and define actions in a different way. 

% Before we go on 
We should also note that unlike other accounts of actions \citep{sep-action} we do not require actions to be necessarily purposeful or goal-directed in any way. 
% as a completely separate phenomenon from action. 
% and will not follow the practice of requiring such for actions themselves. 
The way we conceive agency an entity with actions will be considered goal-directed if its actions are goal-directed in some sense. 
% If all actions were goal-directed then there could be no entity capable of actions that is not also goal-directed. In that case the concept of goal-directedness would be a superfluous requirements. 

% that is compatible with the existence of well defined dynamical laws i.e.\ well defined futures. 

% After these comments on what we cannot do and what we choose not to do we will now motivate our own approach. First, we give some background and observations about actions that motivate our definition. Then we present the main ideas behind it and finally state the definition. 
% our eventual action definition, then , and finally we present our formal definition. 

% With this in mind we now discuss actions as they are commonly conceived of in the real world.

% which  The first is to see actions as events that happen due to unobservable other events. The second is that  Since we have no defined notion of an observer in mvmc we construct events that we consider unobservable for any notion of observer. An action then occurs if two entities that are identical for any observer for some time interval diverge after that time interval. In order to distinguish actions from random events we also require the predictability of the divergence from the unobserved events.
% \subsection{Background to our concept of actions}
\label{sec:background}

To get to our own definition of actions we 
% make two observations about the common (human) usage of the term action. The first is 
note that events called actions are usually attributed to a limited or bounded region or part of the universe e.g.\ the body of a living organism or sometimes just its brain if it has one. These parts usually contain mechanisms or configurations of matter that are either 
% \begin{itemize}
%  \item not directly observable to a human observer e.g.\ hidden in an opaque container, or 
%  \item not well understood by the human observer, or 
%  \item both.
% %  \item simple and partial imitations of mechanisms satisfying the above e.g Braitenberg (hm this is more goal-directedness)
% \end{itemize}
$a)$ not directly observable to a human observer e.g.\ hidden in an opaque container, or $b)$
not well understood by the human observer, or $c)$ both.
%  \item simple and partial imitations of mechanisms satisfying the above e.g Braitenberg (hm this is more goal-directedness)
% \end{itemize}
These factors inevitably lead to unpredictability of such events. In other words, events that are attributed to well understood and therefore predictable mechanisms, e.g.\ sunrises, are not considered actions. 

From this point of view actions are not, beyond their possibly complex and unobserved origin, special events but may \textit{appear} as special to observers that lack the sensory and computational capacity to resolve or understand them. 
% This suggests that for actions to occur within a system there needs to be both observers and corresponding mechanisms that exceed the capacity of those observers to resolve them. 
% i.e.\ see them as mere consequences of the dynamical law. 
% Note that the observers might themselves be such opaque mechanisms for other observers and for themselves. 
In our approach we construct actions as events in such a way that they are fundamentally unpredictable by any observer within the system. As we will see this can be done without the need for a definition of an observer.

% There is a possibility to define actions in a fundamental way without the need to define observers first. Say there are events within the universe which are as a matter of principle not distinguished by \textit{any} observer. Then events occurring as a consequence of these events will be inexplicable for any observer. So these events will appear to be actions in general. This is the route we take below. 
Note that this approach remains compatible with observer-dependent notions of apparent or as-if actions \citep[cmp.][]{mcgregor_bayesian_2017}. The ``fundamental'' actions are apparent actions for every possible observer while other events are actions for some observers and plain, predictable events for others.

% What we have ignored in this discussion 
Up to now we have ignored randomness. True randomness (in the sense of stochastic independence of the event from any other event in the universe), if it exists in a system, can never be explained, predicted, or understood. Combined with our reasoning above this suggests that all random events are actions and fundamentally so. 
% actions in the sense that no observer could possibly resolve the different events that lead to the random events just because there are no different events that lead to a random event. The random event happens independently of everything else. This is also the reason why we would not like to see random events as actions. They are not the result of some indistinguishable but in-system events. Even an all-observing being external to the universe could not predict them from the internals. 

This is against our intuition, random events should not be seen as actions of agents. However, we place the burden of ruling out random events from being interpreted as actions \textit{of agents} on the entity-set and/or goal-directedness. We expect useful notions of entities not to consist of (completely) independent events which would prevent such events from becoming actions in our approach. Furthermore independent events seem not to be useful in order to achieve a particular goal. However, the usefulness of random number generators might be seen as a counterexample. As of now we have no formal definition of goal-directedness such that answering these questions is future work.

\label{sec:actiondef} 
When we want to define actions for entities the first issue we run into is that entities are already fixed STPs. 
% They may or may not have ``acted'' within a trajectory that they occur in but once we have the entity its ``story'' is fixed. In order to define actions we therefore look at the sequence of time-slices of an entity and investigate what ``could have happened''. In the end, whenever there are counterfactual entities that could have taken the place of the entity without changes in the rest of the system and then went on a different path we will say that an action occurred. 

Therefore each entity $x_A \in \Ent$ already consists of the consequences of whatever actions (if any) it took. It is in this sense the \textit{result} of its own actions. There is no freedom left. In order to investigate the actions we therefore have to deconstruct the entity and see what other actions it \textit{could} have taken time-slice by time-slice. For this we look for counterfactual entities $y_B$ whose time-slice at $t$ can occur in exactly the same \textit{environment} $x_{V_t \bs A_t}$ and has a different immediate future. Requiring the co-occurrence with the same environment makes the different futures unpredictable for anything part of that environment. No observer can therefore predict these futures either. Even an observer that distinguished the two entities in the past is in the same state at $t$ independent of the entity it is faced with. Therefore it must have forgotten their difference. In this way we get a definition of actions as unpredictable events for observers without needing to formally define observers.

% 
% In more detail, first note that an action always requires the possibility of an alternative action. However, as argued before, a single entity $x_A$ occurring in trajectory $x_V$ has no alternative options since the trajectory determines everything. Therefore for an action of entity $x_A$ in trajectory $x_V$ we require the existence of alternative/counterfactual entity $y_B$ in another trajectory $y_V$. For an action to occur at time $t$ 
% \begin{itemize}
%   \item the entities $x_A$ and $y_B$ must occupy the same random variables at $t$ i.e.\ $A_t = B_t$,
%   \item the time-slices $x_{v_t}$ and $y_{V_t}$ at $t$ of the two trajectories must coincide everywhere apart from the random variables that are occupied by the entities i.e.\ $x_{V_t \bs A_t} =y_{V_t \bs A_t}$.
% \end{itemize}
% It is then impossible that any observer that is in the ``environment'' $x_{V_t \bs A_t}$ of the entities can distinguish the entities because the states of all such observers are identical in both trajectories. For an action to occur the two identities must then differ at time $t+1$. 

For the formal definition we first define:
\begin{mydef}[Environment of an STP]
  Let $x_A$ be a STP. Then the environment of $x_A$ at time $t$ is the spatial pattern $x_{V_t \bs A_t}$.
\end{mydef}

We now state the definition of an action of an entity at a time $t$ in a particular trajectory formally.

\begin{mydef}[Action and co-action of an entity]
\label{def:action}
 Let $x_V \in \X_V$ with $p_V(x_V) > 0$. Also let $x_A$ be an entity with non-empty time-slices at $t,t+1$. Then $x_A$ \textit{performs an action $x_{A_{t+1}}$ at time $t$ in trajectory $x_V$} if there exists an entity $y_B$ with non-empty time-slices at $t,t+1$ such that 
 \begin{thmlist}
   \item $y_B$ occurs in $y_V \neq x_V$ with $p_V(y_V)>0$, 
   \item at $t$ the entities $x_A$ and $y_B$ occupy the same random variables: $B_t = A_t$,
   \item at $t$ the trajectories $x_V$ and $y_V$ are otherwise identical: $x_{V_t \bs A_t} = y_{V_t \bs A_t}$,
   \item at $t+1$ the entities are different: $x_{A_{t+1}} \neq y_{B_{t+1}}$.
 \end{thmlist}
 We also call $y_B$ a \textit{co-action entity}, $y_V$ a \textit{co-action trajectory}, and $y_{B_{t+1}}$ a \textit{co-action}.
\end{mydef}
Note that all requirements are symmetric. Therefore, if $x_A$ performs an action then its co-action also performs an action. 
%   $x_{A_{t+1}}$ at time $t$ in trajectory $x_V$ then also $y_B$ performs an action $y_{B_{t+1}}$ at time $t$ in trajectory $y_V$. This motivates our terminology of \textit{co}-actions.
Also, the notion of co-action entities can easily be extended to more than one co-action entity. We only have to make sure that all entities in a set of co-action entities are mutually different at $t+1$.
%   \item A further requirement that we could make here would be that $y_B$ does not occur in $x_V$. This is not excluded in this definition. At time $t$ the two entities can in principle be equal $x_{A_t}=y_{B_t}$. At $t+1$ we could have $A_{t+1} \cap B_{t+1} = \emptyset$ so that even if $x_{A_{t+1}} \neq y_{B_{t+1}}$ we can have $x_{B_{t+1}} = y_{B_{t+1}}$. This requires that entities can be identical at some time $t$ and then different at some time $t+1$. We do not exclude this possibility here. It is an interesting question for further research at what level such situations should be prevented (if it should be prevented). It could be introduced as an axiom for entity sets which corresponds to prohibiting interpenetration of entities. However, it could also be a selective criterion for specific dynamics of the multivariate Markov chain. One could imagine that there is a set of dynamics obeying a certain conservation law that prevents interpenetration.
%   The notion of $\ci$-entities does not prevent interpenetration as we see in \cref{sec:exampleconnection}.
Furthermore, it is easy to generalise the definition of actions to situations where $x_A$ and $y_B$ must occupy the same variables for an interval of time $[t-m:t]$ before the action. In that case, the environment $x_{V_{[t-m:t]}} \bs A_{[t-m:t]}$ must be identical during this interval.
% \end{itemize}

Finally, note that the condition that the two acting entities differ at time $t+1$ can be fulfilled in two ways. If $A_{t+1} \neq B_{t+1}$ then we call these actions \textit{extent actions}. Else, if the actions differ only in value i.e.\ we have 
$A_{t+1}=B_{t+1}$ so that $x_{A_{t+1}} \neq y_{A_{t+1}}$ then we call these actions \textit{value actions}.

The difference between value actions and extent actions is made possible due to our definition of entities as STPs. 
% We have argued in \cref{sec:cfvar} that entities should vary counterfactually in value and extent. 
An intriguing question for the future is whether the capabilities of agents to act both in value and extent are truly superior to agents that only act in value such as those modelled by PA-loops. 
% With regard to the theory of computation in distributed systems by \citet{lizier_framework_2014} one can also ask whether there are computational advantages to either. 
As we will see in \cref{sec:palooprelation} probabilistic and information theoretic expressions are easy to formulate for actions in value only. However, for actions in extent this has not been done yet.

\section{Entity perception}
\label{sec:perceptions}
In this section we formally define perception for (STP-) entities. We make no distinction here between perception, experience, and sensory input. In the tradition of modelling agent-environment systems using dynamical systems or their probabilistic generalisations as stochastic processes in PA-loops we define perception as \textit{all effects} that the environment has on an individual/agent \citep{beer_dynamical_1995}. 

% In contrast to previous work along this line the individuals are not modelled as a dynamical system coupled to the environment \citep{beer_dynamical_1995,der_homeokinesis_1999,ay_information-driven_2012} or a stochastic process interaction with another one \citep{klyubin_organization_2004,lungarella_methods_2005,bertschinger_autonomy_2008,seth_measuring_2010,ay_information-driven_2012}. In our case the individuals are entities i.e.\ special kinds of STPs. In order to define perception we therefore have to capture all effects of the environment on \textit{entities} or if we focus on perception of a single entity all effects on that entity.

We run into a similar problem as with the actions. An entity is already a fixed STP that contains all influence that it may have been subjected to. It is in this sense the \textit{result} of influence (or no influences) from its surroundings.  In order to investigate these influences we therefore have to deconstruct the entity and see how it was ``formed'' by external influences / perceptions time-slice by time-slice. 

The idea is to fix the past of the entity up to $t$ and use the set of counterfactual entities with the same past as the alternative futures. We can then partition the possible environments of these counterfactual entities according to their influence on the probability distribution over the entities' futures. This is done in basically the same way as we defined perception in the PA-loop in \cref{def:paloopperception}. However, there are some technical issues to overcome.

 The set of entities with identical pasts up to time $t$ can be interpreted as the set of entities that are the most like $x_A$ up to $t$. These are \textit{different} entities but they only differ in the future. Their futures (including their next time-slices) are therefore a close analogue to the next states $m_{t+1}$ of agent memories in the PA-loop. To make sure however that the entities have a next time-slice we also require that they have \textit{non-empty} next time-slice. These requirements together define the notion of the \textit{co-perception entities of an entity $x_A$ at time $t$}. These are entities that also perceive something (maybe the same thing) at $t$ (in their trajectories) if $x_A$ perceives something at $t$.    
\begin{mydef}[Co-perception entities of an entity at $t$]
\label{def:copent}
   Let $x_A \in \Ent$ be an entity with non-empty time-slices at $t$ and $t+1$. The set of \textit{co-perception entities $\perstp(x_A,t)$ of entity $x_A$ at $t$} is the set of entities with non-empty time-slices at $t$ and $t+1$, and that are identical up to $t$:
  \begin{equation}
    \perstp(x_A,t):=\{y_B \in \Ent: B_t,B_{t+1} \neq \emptyset, y_{B_\pet}=x_{A_\pet}\}.
  \end{equation} 
\end{mydef}
Next we want to define a conditional probability distribution over these co-perception entities similar to $p_{M_{t+1}}(m_{t+1}|e_t,m_t)$. For this we need a random variable that ranges over all the possible futures of the co-perception entities. The naivest way to do this would be to use for $x^k_{A^k} \in \perstp(x_A,t)$:
\begin{equation}
\label{eq:naivemorph}
  p_{\perstp}(x^k_{A^k_\st} | x_{V_t \bs A_t}, x_{A_\pet}):=\frac{p_{A^k,V_t \bs A_t}(x^k_{A^k_\st}, x_{V_t \bs A_t},x_{A_\pet})}{p_{V_t \bs A_t, A_\pet}(x_{V_t \bs A_t},x_{A_\pet})}.
\end{equation} 
However, this has two problems. The first is that in general the denominator may vanish for some environments $x_{V_t \bs A_t} \in \X_{V_t \bs A_t}$. The second is that it is not a conditional probability since the sum over all co-perception indices is not necessarily one. These problems can be solved by restricting the set of environments and by introducing a mutual-exclusivity condition on the futures of the co-perception entities. 

To restrict the environments we therefore define the \textit{co-perception environments} in the following way.
\begin{mydef}[Co-perception environments]
\label{def:copenv}
   Let $x_A \in \Ent$ be an entity with non-empty time-slices at $t$ and $t+1$ and $\perstp(x_A,t)$ its co-perception entities. Then define the \textit{associated co-perception environments $\X^\perstp_{V_t \bs A_t} \subseteq \X_{V_t \bs A_t}$} by 
  \begin{align}
  \begin{split}
   \label{eq:copenv}
    &\X^\perstp_{V_t \bs A_t}:= \\
    &\{\bar{x}_{V_t \bs A_t} : \exists y_B \in \perstp(x_A,t), p_{B,V_t\bs A_t}(y_B,\bar{x}_{V_t \bs A_t}) > 0\}.
  \end{split}
  \end{align} 
\end{mydef}
% Remark:
% \begin{itemize}
%   \item 
  The co-perception environments of a co-perception set $\perstp(x_A,t)$ are then the spatial patterns $\X_{V_t \bs A_t}$ at $t$ that can co-occur with \textit{at least one} co-perception environment. 
%   \item Note that the co-perception environments are related to the microperturbations that preserve glider identity in \citet{beer_cognitive_2014}. Co-perception environments are complete environments however. 
% %   The microperceptions are possible values of the surrounding cells of the glider. In our notion this would be the possible STPs at nodes influencing the entity at $t+1$. With
%   The microperturbations are the possible states of the \textit{immediate} environment. We do not only look at the states of the immediate environment here. 
% %   In general it is not easy to define the immediate environment of an entity. 
%   We could try to define it via 
%   \begin{equation}
% \partial A_{t+1}:= \{\bigcup_{i \in A_{t+1} \pa(i) \bs (A_{t+1} \cup A_t)\}.                                                       
% \end{equation}  
% Then the microperturbations would be the set $\X_{\partial A_{t+1}}$. If there are no instantaneous interactions this is a set of STPs at $t$ not $t+1$. There is no need to restrict to this subset of the environment 
% % Mainly because it is not necessary. The perceptions will partition the environments and the immediate environments into the same number of blocks where two entire environments are in the same block if the immediate environments they contain are in the same block. 
%   \begin{equation}
%     \partial A_t
%   \end{equation} 
% \end{itemize}
By definition of the co-perception environments the denominator of \cref{eq:naivemorph} cannot vanish anymore if we require that the construction only allows co-perception environments. However, in order for the sum over all co-perception entities to equal one (for all co-perception environments) we need to have 
\begin{align}
\begin{split}
  \sum_k p_{A^k,X_{V_t \bs A_t}}&(x^k_{A^k_\st}, x_{V_t \bs A_t},x_{A_\pet}) \\
  &= p_{V_t \bs A_t, A_\pet}(x^k_{A^k_\st}, x_{V_t \bs A_t},x_{A_\pet}).
\end{split}
  \end{align} 
This is the case in general if for all $x_{V_t \bs A_t} \in \X^\perstp_{V_t bs A_t}$ and all co-perception entities' futures are mutually exclusive, i.e.\ for all $x^k_{A^k}\neq x^l_{A^l} \in \perstp(x_A,t)$ we have 
\begin{equation}
\label{eq:specmuex}
  \Pr(x^k_{A^k},x^l_{A^l},x_{V_t \bs A_t},x_{A_\pet})=0.
\end{equation} 

\label{sec:nonint}
% In this section we define the formal assumptions of general non-interpenetration and (past specific) non-interpenetration for entity sets. This leads to mutual-exclusion of entities in ways that enable a unique definition of our notion of entity perception. Without these assumptions we can still define entity perception but there will an arbitrary choice involved which influences the perceptions (we will see this in \cref{sec:branchmorph}). 
% 
% 
% 
% 
% 
% 
% So different choices mean that the extracted perceptions are different. This is not a desirable situation since we are making this choice. The goal of this thesis is, however, that all the notions only depend on the multivariate Markov chain itself. Non-interpenetration is therefor a desirable property.
% Outside of this chapter we only refer to non-interpenetration in this thesis since this is the most general assumption. 
% The more specific class of x-branching assumptions are only presented because they elucidate a subtle point about our notion of perception. 
% There is a trade of in the conditions we place on the entities and those we place on the environments when trying to define our notion of perception. If we want to relax the conditions on the environment then we have to strengthen the conditions on the entity set and vice versa.
This condition can be guaranteed if we require a form of non-interpenetration. 
This condition on entity-sets states that there cannot be two different entities which are identical up to some point in time $t$ and then, in the same \textit{single} trajectory (with positive probability), at some point ``reveal'' their difference. If entities with identical pasts \textit{ever} reveal their difference they must be in different trajectories i.e.\ they must be mutually exclusive. 
% We could call this ``past specific non-interpenetration'' but since we only need this notion outside of this section 
% We will just refer to it as non-interpenetration.
\begin{mydef}[Non-interpenetration]
\label{def:noninterpen}
   An entity-set $\Ent \subseteq \bigcup_{B \subseteq V} \X_B$ \textit{satisfies non-interpenetration or is non-interpenetrating} if for all $y_B,z_C \in \Ent$ we have
  \begin{align}
    \begin{split}\exists t &\in T : y_{B_\pet} = z_{C_\pet} \text{ and }   y_{B_{t\prec}} \neq z_{C_{t\prec}} \\
&\Rightarrow   \Pr(X_{B_{t\prec}}=y_{B_{t\prec}},X_{B_{t\prec}}=z_{C_{t\prec}}|y_{B_\pet})=0.
    \end{split}
  \end{align} 
\end{mydef}
% Remark:
% \begin{itemize}
%   \item We note here that non-interpenetration is not necessarily satisfied by $\ci$-entities as we will see in \cref{sec:actperceptexamples}.
% \end{itemize}

Non-interpenetration implies that co-perception entities are mutually exclusive:
% \begin{thm}
% %   Let $\Xv$ be a multivariate Markov chain with $V=J\times T$ and $\Ent \subseteq \bigcup_{B \subseteq V} \X_B$ a non-interpenetrating entity set. Then for any entity $x_A \in \Ent$ with non empty time slices at $t$ and any time $t$ with $x_{A_\precthe co-perception entities
% \end{thm}
\begin{thm}
\label{thm:nonintmuex}
   Let $x_A \in \Ent$ be an entity with non-empty time-slices at $t$ and $t+1$ and $\perstp(x_A,t)$ its co-perception entities. If $\Ent$ satisfies non-interpenetration then $\perstp(x_A,t)$ is mutually exclusive.
\end{thm}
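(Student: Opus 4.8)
The plan is to apply the non-interpenetration hypothesis of \cref{def:noninterpen} directly to an arbitrary pair of distinct co-perception entities, taking the witness time to be $t$ itself, and then to pass from the resulting conditional statement to the joint-probability form of mutual exclusivity in \cref{eq:specmuex}.

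First I would unpack the structure of $\perstp(x_A,t)$. Let $x^k_{A^k}, x^l_{A^l} \in \perstp(x_A,t)$ with $x^k_{A^k}\neq x^l_{A^l}$. By \cref{def:copent} both satisfy $x^k_{A^k_\pet}=x_{A_\pet}=x^l_{A^l_\pet}$; in particular $A^k_\pet=A_\pet=A^l_\pet$ and the two patterns agree on every variable at times $\preceq t$. Since they are distinct as STPs, they must then disagree strictly after $t$, i.e.\ $x^k_{A^k_\st}\neq x^l_{A^l_\st}$. This is the only use of distinctness, and it is phrased so as to cover both the value case ($A^k_\st=A^l_\st$) and the extent case ($A^k_\st\neq A^l_\st$).

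Next I would invoke \cref{def:noninterpen} with $y_B:=x^k_{A^k}$, $z_C:=x^l_{A^l}$ and time index $t$: the premise $y_{B_\pet}=z_{C_\pet}$ holds since both sides equal $x_{A_\pet}$, and $y_{B_\st}\neq z_{C_\st}$ holds by the previous step, so non-interpenetration yields
\[
  \Pr\bigl(X_{A^k_\st}=x^k_{A^k_\st},\,X_{A^l_\st}=x^l_{A^l_\st}\,\bigm|\,X_{A_\pet}=x_{A_\pet}\bigr)=0 .
\]
To obtain \cref{eq:specmuex}, observe that since $A_\pet\subseteq A^k$ and $x^k_{A^k_\pet}=x_{A_\pet}$, the event $\{X_{A^k}=x^k_{A^k}\}$ is contained in $\{X_{A^k_\st}=x^k_{A^k_\st}\}\cap\{X_{A_\pet}=x_{A_\pet}\}$, and $\{X_{A^l}=x^l_{A^l}\}\subseteq\{X_{A^l_\st}=x^l_{A^l_\st}\}$. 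Hence by monotonicity the probability in \cref{eq:specmuex} is at most $\Pr(X_{A^k_\st}=x^k_{A^k_\st},X_{A^l_\st}=x^l_{A^l_\st},X_{A_\pet}=x_{A_\pet})$. If $\Pr(X_{A_\pet}=x_{A_\pet})>0$, this last quantity equals $\Pr(X_{A_\pet}=x_{A_\pet})$ times the displayed conditional probability, hence is $0$; if $\Pr(X_{A_\pet}=x_{A_\pet})=0$, it is bounded by $\Pr(X_{A_\pet}=x_{A_\pet})=0$ anyway. Either way \cref{eq:specmuex} holds, and as $x^k_{A^k}\neq x^l_{A^l}$ were arbitrary, $\perstp(x_A,t)$ is mutually exclusive.

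I do not anticipate a genuine obstacle; the work is essentially careful bookkeeping with the index sets $A^k_\pet$ and $A^k_\st$. The one thing to watch is not to tacitly assume $A^k_\st=A^l_\st$, since distinct co-perception entities may differ in extent as well as in value --- the non-interpenetration condition is stated precisely so as to handle both. It is also worth remarking that the environment $x_{V_t\bs A_t}$ is irrelevant to this particular argument (it is simply discarded by the monotonicity step), so \cref{def:copenv} is not needed for this theorem.
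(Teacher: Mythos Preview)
Your proposal is correct and follows the same route as the paper: pick two distinct co-perception entities, use \cref{def:copent} to see they share the past $x_{A_\pet}$, and feed this into \cref{def:noninterpen}. The paper's proof is terser---it simply asserts that non-interpenetration yields $\Pr(X_B=y_B,X_C=z_C)=0$ and notes this is stronger than \cref{eq:specmuex}---whereas you spell out the passage from the conditional statement to the joint one and handle the degenerate case $\Pr(X_{A_\pet}=x_{A_\pet})=0$; those details are fine to include and do not change the argument.
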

\begin{proof}
  Let $y_B,z_C \in \perstp(x_A,t)$ with $y_B \neq z_C$. Then they have identical pasts and so we have $y_{B_\pet} = z_{C_\pet}$. From non-interpenetration we then get 
  \begin{equation}
    \Pr(X_B=y_B,X_C=z_C)=0.
  \end{equation} 
  This is stronger than \cref{eq:specmuex}.
%   From 
% 
%   Let $b_1,b_2 \in \eta(x_A,t)$ and $y_B \in b_1, z_C \in b_2$ which implies that $y_B \neq z_C$. Since $y_B,z_C \in \perstp(x_A,t)$ they have identical pasts and so we have $B \cap C \neq \emptyset$. From non-interpenetration we then get 
%   \begin{equation}
%     \Pr(X_B=y_B,X_C=z_C)=0.
%   \end{equation} 
%   From 
\end{proof}
% Remark:
% \begin{itemize}
%   \item Note that non-interpenetration does not imply anything about exhaustiveness. We can have non-interpenetrating co-perception entities that are not exhaustive. But since we know how to define a conditional probability distribution for non-exhaustive sets of STPs (\cref{thm:pdistconst}) this is not a problem.
% \end{itemize}

This means that under non-interpenetration \cref{eq:naivemorph} is a well defined conditional probability distribution 
%call it the \textit{co-perception morph}
. 
However, this conditional probability distribution is still quite different from $p_{M_{t+1}}(m_{t+1}|e_t,m_t)$ since it ranges over the entire futures $x^k_{A^k_\st}$ of the co-perception entities and not just next timesteps. 

At each transition from time-step $t$ to $t+1$ the co-perception entities $\perstp(x_A,t)$ split up into sets of entities that are identical up to $t+1$ (we will call these sets the \textit{branches}). Only one of these sets is the set $\perstp(x_A,t+1)$. For example an entity $y_B \in \perstp(x_A,t)$ with the same past up to $t$ but with a different time-slice at $t+1$ i.e.\ $y_{B_{t+1}} \neq x_{A_{t+1}}$ is part of a different branch. In that case this branch is $\perstp(y_B,t+1)$ and we have $\perstp(y_B,t+1) \cap \perstp(x_A,t+1) = \emptyset$. In summary then the dynamics of the system split up the co-perception entities of $x_A$ up to $t$ into disjoint sets (the branches) of entities with identical pasts up to $t+1$. We can then interpret the branches at the time $t+1$ as the distinctions among the co-perception entities that are revealed at time $t+1$. Further distinctions among the co-perception entities are only revealed at later times. This also means that these are \textit{all} differences that could possibly be due to the influence of the environment at $t$ and that show their effect at $t+1$ (not later). In this way the perceptions at $t$ should also be defined with respect to these branches. We call the partition that is defined via the identification of entities in $\perstp(x_A,t)$ that are identical up to $t+1$ the \textit{branching partition}. 
\begin{mydef}[Branching partition]
\label{def:compart}
 Let $x_A \in \Ent$ be an entity with non-empty time-slices at $t$ and $t+1$ and $\perstp(x_A,t)$ its co-perception entities. 
Then define the \textit{branching partition $\eta(x_A,t)$ of $\perstp(x_A,t)$} as the partition induced by the equivalence classes of the equivalence relation
   \begin{equation}
    \begin{split}y_B \sim & z_C \\
    &\Leftrightarrow  y_{B_{t+1}}=z_{C_{t+1}},
    \end{split}
  \end{equation} 
where $y_B,z_C \in \perstp(x_A,t)$.
\end{mydef}
% Remark:
% \begin{itemize}
%   \item All elements of $\perstp(x_A,t)$ already satisfy $y_{B_{\preceq t}}=z_{C_{\preceq t}}$ so if they are in the same block of the common next-time slice partition we have $y_{B_{\preceq t+1}}=z_{C_{\preceq t+1}}$.
%   \item In the PA-loop analogy this does the following. Say we take entire time evolutions $m_T \in \M_T$ of the agent as entities. Then for a given $m_T$ and time $t$ we can choose all $\bar{m}_T$ with identical pasts  as the co-perception entities (they all are non-empty at $t,t+1$)
%   \begin{equation}
%     \perstp(m_T,t)=\{\bar{m}_T \in  \M_T: \bar{m}_\pet = m_\pet\}.
%   \end{equation} 
%   The branching partition $\eta(m_T,t)$ now collects all $\hat{m}_T \in \perstp(m_T,t)$ into blocks that have the same value at $t+1$ i.e.\ for each value of $M_{t+1}$ there is one block of the branching partition that contains all $\hat{m}_T$ that are identical up to $t+1$.
%   \item 
We note that the definition of the branching partition can easily be generalised to more than one time-step into the future. Instead of requiring equality at $t+1$ we can require equality for the next $r$ time-steps.
%    \begin{equation}
%     \begin{split}y_B \sim & z_C \\
%     &\Leftrightarrow y_{B_{t+1:t+r}}=z_{C_{t+1:t+r}}
%     \end{split}
%   \end{equation} 
%   This leads to a partition of $\perstp(x_A,t)$ which is a refinement of $\eta(x_A,t)$. The branches of $t+1$ are further partitioned according to the equality of the contained entities at $t+2$, and similarly at each subsequent time-step. This may be used to construct a kind of multi-time-step perception which is more precise than one-time-step perception. Here we focus only on the one-step-perception, nonetheless all further notions are easily adapted to the multi-step case.
%   \item Calling the blocks of the co-perception partition ``branches'' is due to their interpretation as blocks of the next time-slice partition. This partition can be seen as a tree starting from $x_{A_\pet}$ as the root, branching into the blocks of $\eta(x_A,t)$. Each branch defines a new root, which, if we follow $x_A$, is then $x_{A_{\preceq t+1}}$ this then branches out into the blocks of $\eta(x_A,t+1)$ and so on.
% \end{itemize}
% The branches of the branching partition are the final analogue notion of ``the entity's next possible time-slices''. 
% We then define a conditional probability distribution over the branches (called the branch-morph) and classify the co-perception environments accordingly.

% \paragraph*{Branch-morph}
\label{sec:branchmorph}
Given the branching partition $\eta(x_A,t)$ for a non-interpenetrating entity set we can then define a conditional probability distribution over the branches by just summing up the probabilities of all entities in each branch (remember that they are all mutually exclusive) to get the probability of a branch. This gives us the branch-morph defined below.\footnote{We note here that for entity-sets that exhibit interpenetration we can still define branch-morphs for mutually-exclusive subsets of the co-perception entities. Since the choice of these subsets is arbitrary however this does not lead to a uniquely defined notion of perception. For more details see \citet{biehl_formal_2017}.}

\begin{mydef}[Branch-morph]
\label{def:branchmorph}
 Let $\Xv$ be a multivariate Markov chain with index set $V=J\times T$ and entity set $\Ent$. Let $x_A \in \Ent$ be an entity with non-empty time-slices at $t$ and $t+1$ and $\perstp(x_A,t)$ its co-perception entities and $\eta(x_A,t)$ the branching partition.
 Furthermore, let $\X^\perstp_{V_t \bs A_t} \subseteq \X_{V_t \bs A_t}$ be the associated co-perception environments. Also write for every block $b \in \eta(x_A,t)$:
 \begin{equation}
 \label{eq:beforebranchmorph}
   p(b|\hat{x}_{V_t \bs A_t},x_{A_{\preceq t}}):= \sum_{y_B \in b} p_{B_{t\prec},V_t\bs A_t}(y_{B_{t\prec}}|\hat{x}_{V_t \bs A_t},x_{A_{\preceq t}}).
 \end{equation} 
  Then for each $\hat{x}_{V_t \bs A_t} \in \X^\perstp_{V_t \bs A_t}$ we define the \textit{branch-morph} over $\eta(x_A,t)$ as the probability distribution $p_{\eta(x_A,t)}(.|\hat{x}_{V_t\bs A_t},x_{A_{\preceq t}}):\eta(x_A,t) \rightarrow [0,1]$ with 
 \begin{equation}
 \label{eq:branchmorph}
   p_{\eta(x_A,t)}(b|\hat{x}_{V_t\bs A_t},x_{A_{\preceq t}}):=\frac{p(b|\hat{x}_{V_t \bs A_t},x_{A_{\preceq t}})}{\sum_{c \in \eta(x_A,t)} p(c|\hat{x}_{V_t \bs A_t},x_{A_{\preceq t}})},
 \end{equation} 
 for all $b \in \eta(x_A,t)$.
\end{mydef}

With the branch-morph we can then define, as expected, the perceptions as equivalence classes of the co-perception environments with respect to the associated branch-morph. First we define a partition of the co-perception \textit{environments} called the co-perception environment partition. The perceptions are then the blocks of this partition.

\begin{mydef}
Let $\Xv$ be a multivariate Markov chain with index set $V=J\times T$ and entity set $\Ent$. Let $x_A \in \Ent$ be an entity with non-empty time-slices at $t$ and $t+1$ and $\perstp(x_A,t)$ its co-perception entities and $\eta(x_A,t)$ the branching partition.
 Furthermore, let $\X^\perstp_{V_t \bs A_t} \subseteq \X_{V_t \bs A_t}$ be the associated co-perception environments.
  Then define the \textit{co-perception environment partition $\pi^\perstp(x_A,t)$ of $\X^\perstp_{V_t \bs A_t}$} as the partition induced by the equivalence classes of the equivalence relation
  \begin{equation}
    \begin{split}
    \label{eq:bramorpheq}
    \hat{x}&_{V_t \bs A_t} \sim  \bar{x}_{V_t \bs A_t} \\
    &\Leftrightarrow \forall b \in \eta(x_A,t): \\
    &\phantom{\Leftrightarrow \forall} p_{\eta(x_A,t)}(b|\hat{x}_{V_t\bs A_t},x_{A_{\preceq t}}) = p_{\eta(x_A,t)}(b|\bar{x}_{V_t\bs A_t},x_{A_{\preceq t}}).
    \end{split}
  \end{equation} 
\end{mydef}
% Remarks:
% \begin{itemize}
%   \item 
  This means all associated co-perception environments in the same block of $\pi^\perstp(x_A,t)$ have the same branch-morph. In other words they lead to the same branch of entity futures (i.e.\ the same future branch) with the same probabilities. Then all elements of these environment blocks have identical effects on the future branches and these branches cannot distinguish between environments within the blocks.
%   \item The construction of this partition\footnote{We thank Benjamin Heuer for pointing out this construction in the context of two interacting stochastic processes. The case here is the direct generalization for branch-morphs.} is similar to the definition of causal states \citep{shalizi_causal_2001}.
% \end{itemize}

\begin{mydef}[Perceptions]
Let $\Xv$ be a multivariate Markov chain with index set $V=J\times T$ and entity set $\Ent$. Let $x_A \in \Ent$ be an entity with non-empty time-slices at $t$ and $t+1$ and $\perstp(x_A,t)$ its co-perception entities.
 Furthermore, let $\X^\perstp_{V_t \bs A_t} \subseteq \X_{V_t \bs A_t}$ be the associated co-perception environments and $\pi^\perstp(x_A,t)$ its co-perception environment partition.

 Then the blocks of $\pi^\perstp(x_A,t)$ are called the \textit{perceptions of $x_A$ at $t$}.

% Then the blocks of $\pi^\perstp(x_A,t)$ are called the \textit{perceptions of the co-perception entities $\perstp$}.
\end{mydef}

\section{Entity action and perception in the PA-loop}
\label{sec:palooprelation}
% We now present a list of assumptions under which we obtain a perception action loop from a Markov chain containing an acting and perceiving entity set. Perception action loops are often used in the literature to model agents \citep{klyubin_organization_2004,bertschinger_autonomy_2008,zahedi_quantifying_2013}.
% 
% 
% 
% We now present a list of assumptions under which we obtain a perception action loop from a Markov chain containing an acting and perceiving entity set. Perception action loops are often used in the literature to model agents \citep{klyubin_organization_2004,bertschinger_autonomy_2008,zahedi_quantifying_2013}.
% In a perception action loop  the agent is an interacting stochastic process. In the framework of multivariate Markov chain $\Xv$ this is a subset of the random variables. 
% 
% exists within each trajectory and is modell. Therefore every

We now show that agent-environment systems as modelled by the PA-loop are multivariate Markov chains containing a specific choice of entity sets.

In the PA-loop each trajectory $x_V$ is considered to consist of a time-evolution $m_T$ of the agent and a time-evolution of the environment $e_T$. The agent therefore occurs in every trajectory and occupies the same degree of freedom in every trajectory. 
% According to our working definition (\cref{def:agent}) all agents are entities and according to our concept of entities (\cref{def:entset}), entities are STPs. 
Each of the time-evolutions $m_T$ is a STP in the PA-loop. Since for us entities are STPs we define the entity-set $\Ent^\pal$ of a PA-loop as the set of time-evolutions of the agent process i.e.\
\begin{equation}
  \Ent^\pal:=\{m_T \in \prod_{t \in T} \M_t\}.
\end{equation}
% With this definition every time-evolution of the agent process corresponds to an entity and every such entity corresponds to the time-evolution of an agent (or \textit{the} agent). 
Similarly, we can define entities for the environments and add them to $\Ent^\pal$. 
% The symmetry of the PA-loop makes no a priori difference between agents and environments. We will focus on the agent process here and do not need environment entities. 

% Compared to $\ci$-entities (\cref{def:cientset}) the entity set $\Ent^\pal$ is not very restrictive. Even if each $M_t$ for $t \in T$ is a independently distributed random variable the sequences $m_T$ would still be considered time evolutions of entities. Furthermore, each of these entities would be considered an agent in this picture. In order to introduce stronger conditions efforts have been made to distinguish informationally closed \citep{bertschinger_information_2006} and autonomous \citep{bertschinger_autonomy_2008} agents. Our own concept of agents also puts stronger constraints on the notion of an entities. We require actions, perception, and goal-directedness from entities that can be counted as agents. We have not defined a notion of goal-directedness, but the notions of entity actions (\cref{sec:actions}) and entity perceptions (\cref{sec:perceptions}) can be used for the PA-loop entity set $\Ent^\pal$. We will consider this next and see that our more general requirements of actions corresponds to a requirement by \citeauthor{bertschinger_autonomy_2008} in the case of PA-loop entity sets.

\paragraph*{Entity actions in the PA-loop}

% According to our definition (\cref{def:action}) an entity $x_A$ performs an action at time $t$ in a trajectory $x_V$ if there is a co-action entity $y_B$ occurring in a co-action trajectory $y_V$ with $x_{V_t \bs A_t}=y_{V_t \bs B_t}$. 
We can write every trajectory as a pair $(m_T,e_T)$ where $m_T$ is an entity. The entity $m_T$ then performs an entity action at time $t$ in trajectory $(m_T,e_T)$ with $p_{M_T,E_T}(m_T,e_T)>0$ if there is an entity $\bar{m}_t$ such that
\begin{itemize}[noitemsep]
   \item $\bar{m}_T$ occurs in $(\bar{m}_T,\bar{e}_T) \neq (m_T,e_T)$ with $p_{M_T,E_T}(\bar{m}_T,\bar{e}_T)>0$, 
   \item at $t$ entities $m_T$ and $\bar{m}_T$ occupy the same random variables, which is the case for all entities in $\Ent^\pal$,
   \item at $t$ environments of $m_T$ and $\bar{m}_T$ are identical: $e_t = \bar{e}_t$,
   \item at $t+1$ the entities are different: $m_{t+1} \neq \bar{m}_{t+1}$.
\end{itemize}
Since all entities occupy the same random variables we can only have value actions in the PA-loop.

% We now show that these conditions can be related to conditions for autonomous systems/agents proposed by \citet{bertschinger_autonomy_2008}. We can say that the more entities perform actions at $t$ the higher is potentially the non-heteronomy component of a measure of autonomy proposed by \citeauthor{bertschinger_autonomy_2008}.

If we assume that these conditions are fulfilled at some time $t$ for two entities $m_T,\bar{m}_T$ we can derive that the conditional entropy $\HS(M_{t+1}|E_t)$ of the next agent state given the current environment state is greater than zero.
% \begin{equation}
%   \HS(M_{t+1}|E_t) > 0.
% \end{equation}
To see this note that from $p_{M_T,E_T}(m_T,e_T)>0$ and $p_{M_T,E_T}(\bar{m}_T,\bar{e}_T)>0$ it directly follows that $p_{M_{t+1}}(m_{t+1}|e_t)>0$, $p_{M_{t+1}}(\bar{m}_{t+1}|e_t)>0$ and $p_{E_t}(e_t)>0$. Plugging this into the definition we get $\HS(M_{t+1}|E_t)>0$.
%   \begin{align}
    
%   \end{align}
  %   \begin{align}
%     \HS(M_{t+1}|E_t):&=-\sum_{\hat{e}_t\in \E_t} p_{E_t}(\hat{e}_t) \sum_{\hat{m}_{t+1}\in \M_{t+1}} p_{M_{t+1}}(\hat{m}_{t+1}|\hat{e}_t) \log p_{M_{t+1}}(\hat{m}_{t+1}|\hat{e}_t) \label{eq:nonhet}\\
%     \begin{split}&\geq -p_{E_t}(e_t) \left(p_{M_{t+1}}(m_{t+1}|e_t) \log p_{M_{t+1}}(m_{t+1}|e_t)\right.\\
% &\phantom{\geq -p_{E_t}(e_t) \left(\right.}\left.+ p_{M_{t+1}}(\bar{m}_{t+1}|e_t) \log p_{M_{t+1}}(\bar{m}_{t+1}|e_t)\right)
%     \end{split}\\
%     &>0.
%   \end{align}
% \end{proof}
It can also be seen that the more different co-action entities there are for a time $t$ the higher the conditional entropy $\HS(M_{t+1}|E_t)$ can get.
% from this 
% that the more entities perform actions at $t$ the more terms in \cref{eq:nonhet} are positive. 
The final value of $\HS(M_{t+1}|E_t)$ depends on the actual probabilities but the maximum value for $n$ co-actions is $\log n$. Also note that if there are no actions at $t$ i.e.\ no co-action entity in no co-action trajectory at $t$ then $\HS(M_{t+1}|E_t)=0$. Entity actions of entities in $\Ent^\pal$ are therefore necessary and sufficient for $\HS(M_{t+1}|E_t)>0$.
The conditional entropy $\HS(M_{t+1}|E_t)$ measures the uncertainty about the next agent state when the current environment state is known. It has been proposed as part of an autonomy measure as a measure of non-heteronomy in \citet{bertschinger_autonomy_2008}. Non-heteronomy means that the agent is not determined by the history of the environment. In this terminology entity actions are a necessary and sufficient condition for non-heteronomy.

\paragraph*{Entity perception in the PA-loop}

We now look at how entity perception as defined in \cref{sec:perceptions} specialises to the case of the PA-loop. This argument in effect constitutes a proof that our \cref{def:branchmorph} of the branch-morph is a generalisation of the conditional probability distributions $p_{M_{t+1}}(.|m_t,e_t):\M_{t+1} \rightarrow [0,1]$ to non-interpenetrating arbitrary sets of co-perception entities 
% that (in contrast to the case of the PA-loop) do not exhaust a set of future random variables and may exhibit counterfactual variation in extent. 
This result is not surprising since we set out to do just this but it is also instructive to work through the recovery of the original expression of the conditional probability distribution starting from the general branch-morph.

% The entities may exhibit counterfactual variation and degree of freedom traversal. In particular they may exhibit counterfactual variation in extent.

We pick an entity $m_T$ from the entity set $\Ent^\pal$ and consider its perceptions at an arbitrary time-step $t\in T$. In order to get the perceptions at $t$ we need 
\begin{enumerate}
  \item the co-perception entities $\perstp(m_T,t)$ of $m_T$ at $t$,
  \item the branching partition $\eta(m_T,t)$ with its branches,
  \item the co-perception environments,
  \item the branch-morphs for each environment,
  \item and the co-perception environment partition $\pi^\perstp(x_A,t)$ with its blocks, the perceptions.
\end{enumerate}
These can be identified in the following way.
% \begin{enumerate} 
%   \item 

$1.$ The co-perception entities $\perstp(m_T,t)$ are the entities in $\Ent^\pal$ that have non-empty time-slices at $t,t+1$, and that are identical to $m_T$ up to $t$. All entities in $\Ent^\pal$ have non-empty time slices at all times. So we have:
\begin{equation}
  \perstp(m_T,t)=\{\bar{m}_T \in \Ent^\pal:\bar{m}_{\preceq t}=m_{\preceq t}\} \\
\end{equation} 
% Note that as $t$ increases there are less and less co-perception entities. At $t=n-1$ (recall that $T=0:n-1$) we eventually have $\perstp(m_T,t)=\{m_T\}$. Also note that the futures of the co-perception entities exhaust the future random variables $M_{t \prec}$ i.e.\
% \begin{equation}
%   \perstp(m_T,t)_{t \prec} = \{\bar{m}_{t \prec}: \bar{m}_T \in \perstp(m_T,t)\} = \M_{t \prec}.
% \end{equation} 

$2.$
First note that the entity set $\Ent^\pal$ satisfies non-interpenetration since they all occupy the same set $\Mt$ of random variables. 
% Therefore $\perstp(m_t,t)$ is mutually exclusive and we get unique perception via the branching partition $\eta(m_T,t)$ of the entire set $\perstp(m_t,t)$. 
The branching partition $\eta(m_T,t)$ is composed out of blocks (the branches) of co-perception entities that 
% can co-occur at $t+1$. So for $\hat{m}_T,\bar{m}_T \in \perstp(m_T,t)$
%    \begin{equation}
%     \begin{split}\hat{m}_T \sim & \bar{m}_T \\
%     &\Leftrightarrow \Pr(M_{t+1}= \hat{m}_{t+1},M_{t+1}=\bar{m}_{t+1}|M_\pet=m_\pet)>0.
%     \end{split}
%   \end{equation} 
% Since all entities in $\perstp(m_T,t)$ (just like all entities in $\Ent^\pal$) occupy the same random variable $M_{t+1}$ at $t+1$ the only entities that are equivalent are those that 
% 
are identical up to $t+1$ i.e.\  
   \begin{equation}
    \begin{split}\hat{m}_T \sim & \bar{m}_T \\
    &\Leftrightarrow \hat{m}_{t+1}=\bar{m}_{t+1}.
    \end{split}
  \end{equation} 
We can therefore identify the blocks of $\eta(m_T,t)$ i.e.\ the future branches by the values that the entities take at $t+1$. 
% There are $|\M_{t+1}|$ different values of $M_{t+1}$ such that there are $|\M_{t+1}|$ future branches, one branch associated to each value $\bar{m}_{t+1} \in \M_{t+1}$. 
Define the branch $b(\bar{m}_{t+1})$ associated to $\bar{m}_{t+1} \in \M_{t+1}$ via 
\begin{equation}
\label{eq:pabranches}
  b(\bar{m}_{t+1}):=\{\hat{m}_T \in \perstp(m_T,t) : \hat{m}_{t+1} = \bar{m}_{t+1}\}.
\end{equation}
% Note that some of these branches may be empty since only entities $\hat{m}_T$ with $p_{M_T}(\hat{m}_T)>0$ are in $\perstp(m_T,t)$. 
The branching partition is then: 
\begin{equation}
  \eta(m_T,t)= \{b(\bar{m}_{t+1})\subseteq \perstp(m_T,t): \bar{m}_{t+1} \in \M_{t+1}\}. 
\end{equation} 

$3.$
The co-perception environments are the STPs $x_{V_t \bs A_t}$ compatible with at least one co-perception entity. For the PA-loop and entity $m_T$ at $t$ we have $\X_{V_t \bs A_t}=\E_t$ and therefore $\X^\perstp_{V_t \bs A_t}= \E^\perstp_t$. Where $\E^\perstp_t$ is 
\begin{equation}
  \E^\perstp_t= \{e_t \in \E_t:\exists \bar{m}_T \in \perstp(m_T,t), p_{M_T,E_t}(\bar{m}_T,e_t)>0\}.
\end{equation} 
% As we have noted in \cref{sec:copenv} since the co-perception entities exhaust $M_{t\prec}$ this requirement is equivalent to
If we marginalize over $M_\st$ we can see that this is equivalent to 
\begin{equation}
  \E^\perstp_t= \{e_t \in \E_t: p_{M_t,E_t}(\bar{m}_t,e_t)>0\}.
\end{equation} 
% Note that the number of co-perception environments also decreases over time with $\perstp(m_T,t)$. At $t=n-1$ we have $\E^\perstp_{n-1}= \{e_t \in \E_t: p_{M_{n-1},E_{n-1}}(m_{n-1},e_{n-1})>0\}$.

$4.$
The branch-morphs are the probability distributions $p_{\eta(m_T,t)}(.|e_t,m_\pet):\eta(m_T,t)\rightarrow [0,1]$ over the branches for each co-perception environment $e_t \in \E^\perstp_t$. These are defined using \cref{eq:beforebranchmorph} which for the perception-loop becomes
\begin{equation}
   p(b(\bar{m}_{t+1}),e_t|m_\pet):= \sum_{\hat{m}_T \in b(\bar{m}_{t+1})} p_{M_{t\prec},E_t}(\hat{m}_{t\prec},e_t|m_\pet).
 \end{equation} 
We can rewrite the sum on the right hand side using \cref{eq:pabranches} for $b(\bar{m}_{t+1})$ and then $\perstp(m_T,t)_{t \prec}=\M_{t \prec}$:
\begin{align}
 p(b(\bar{m}_{t+1}),e_t|m_\pet) 
%  &= \sum_{\{\hat{m}_T \in \perstp(m_T,t) : \hat{m}_{t+1} = \bar{m}_{t+1}\}} p_{M_{t\prec},E_t}(\hat{m}_{t\prec},e_t|m_\pet)\\
% &= \sum_{\{\hat{m}_{t \prec} \in \M_{t \prec} : \hat{m}_{t+1} = \bar{m}_{t+1}\}} p_{M_{t\prec},E_t}(\hat{m}_{t\prec},e_t|m_\pet)\\
%   &=\sum_{\hat{m}_{t+1 \prec} \in \M_{t+1 \prec}} p_{M_{t\prec},E_t}(\bar{m}_{t+1},\hat{m}_{t+1 \prec},e_t|m_\pet)\\
%   &
  =p_{M_{t+1},E_t}(\bar{m}_{t+1},e_t|m_\pet).
\end{align}
The definition of the branch-morph for the PA-loop is
\begin{align}
%  \label{eq:branchmorph}
   p_{\eta(m_T,t)}(b(\bar{m}_{t+1})|e_t,m_\pet):&=\frac{p(b(\bar{m}_{t+1}),e_t|m_\pet)}{\sum_{b \in \eta(m_T,t)} p(b,e_t|m_\pet)} 
%    \\
%    &=\frac{p(b(\bar{m}_{t+1}),e_t|m_\pet)}{\sum_{\hat{m}_{t+1} \in \M_{t+1}} p(b,e_t|m_\pet} \\
 \end{align}
 which can be rewritten (with some work) as 
\begin{align}
  \label{eq:branchmorphpa}
   p_{\eta(m_T,t)}(b(\bar{m}_{t+1})|e_t,m_\pet)
%    &=
%    \frac{p_{M_{t+1},E_t}(\bar{m}_{t+1},e_t|m_\pet)}{\sum_{\hat{m}_{t+1} \in \M_{t+1}} p_{M_{t+1},E_t}(\hat{m}_{t+1},e_t|m_\pet)} \\
%    &=\frac{p_{M_{t+1},E_t}(\bar{m}_{t+1},e_t|m_\pet)}{p_{E_t}(e_t|m_\pet)} \\
   &=p_{M_{t+1}}(\bar{m}_{t+1}|e_t,m_\pet) \\
   &=p_{M_{t+1}}(\bar{m}_{t+1}|e_t,m_t)
%    &=\frac{p(b(\bar{m}_{t+1}),e_t|m_\pet)}{\sum_{\hat{m}_{t+1} \in \M_{t+1}} p(b,e_t|m_\pet} \\
 \end{align}
where we used the BN of the PA-loop. 

$5.$The co-perception environment partition $\pi^\perstp(m_T,t)$ of $\E^\perstp_t=$ is induced by \cref{eq:bramorpheq} which, using the PA-loop and \cref{eq:branchmorphpa} becomes
%  \begin{equation}
%     \begin{split}\hat{e}_t \sim & \bar{e}_t \\
%     &\Leftrightarrow \forall b \in \eta(m_T,t):p_{\eta(m_T,t)}(b|\hat{e}_t,m_{\preceq t}) = p_{\eta(m_T,t)}(b|\bar{e}_t,m_{\preceq t}).
%     \end{split}
%   \end{equation} 
% Using the branch-morph above this is equivalent to
 \begin{equation}
    \begin{split}\hat{e}_t & \sim  \bar{e}_t \\
    &\Leftrightarrow \forall m_{t+1} \in \M_{t+1}: \\
    &\phantom{\Leftrightarrow \forall}p_{M_{t+1}}(m_{t+1}|\hat{e}_t,m_t) = p_{M_{t+1}}(m_{t+1}|\bar{e}_t,m_t)
    \end{split}
  \end{equation} 
which is just the equivalence relation of \cref{eq:perceptionequiv} used to extract the sensor-values in \cref{sec:paloop}.

% \begin{equation}
%   \eta(m_T,t)= \{b \subseteq \perstp(m_T,t): 
% \end{equation} 
% So they all share the base $m_{\preceq t}$. The co-perception entities must also be mutually exclusive at $t+1$ i.e.\ for two entities $m_T,\bar{m}_T \in \Ent^\pal$
% \begin{equation}
%   \Pr(M_{t+1}=m_t,M_{t+1}=\bar{m}_t|m_{\preceq t})=0.
% \end{equation} 
% This condition is always satisfied for entities that differ at $t+1$ (i.e.\ $m_{t+1} \neq \bar{m}_{t+1}$) since all entities occupy the same random variable $M_{t+1}$ at $t+1$. We can therefore define the co-perception entitiesof $m_T$ at $t$ as
% \begin{equation}
%   \perstp^\pal(m_T,t):=\{\bar{m}_T \in \Ent^\pal: \bar{m}_{\preceq t}=m_{\preceq t},\bar{m} 
% \end{equation} 
% 
% \end{enumerate}
So we have seen that our definitions of \cref{sec:perceptions} specialise in the case of the PA-loop to the same concept of perception as in \cref{sec:paloop}. 
\section{Conclusion}
We have defined actions and perceptions for entity-sets and therefore for sets of spatiotemporal patterns. This provides a formally defined way to associate gliders and similar spatiotemporal patterns in reaction-diffusion systems with actions and perceptions. This is a step towards a foramization of agency of such patterns. Here a notion of goal-directedness is still missing and future work. We have also shown how our definitions specialize to a necessary and sufficient condition for non-heteronomy and the standard notion of perceptions of the agent process in the PA-loop. For future research it is interesting to note that the branch-morphs are generalisations of the conditional probability distribution $p_{M_{t+1}}(.|e_t,m_t)$. These conditional probability distributions play a role in various information theoretic concepts formulated for the PA-loop. This suggests we might be able to translate bakc and forth between PA-loop concepts and those for spatiotemporal patterns in the future.

We noted that a unique definition of entity perception is dependent on the condition of non-interpenetration of the entity-set.

\footnotesize
\bibliographystyle{apalike}
\bibliography{../bibliography} % replace by the name of your .bib file

\begin{thebibliography}{}

\bibitem[Balduzzi, 2011]{balduzzi_detecting_2011}
Balduzzi, D. (2011).
\newblock Detecting emergent processes in cellular automata with excess
  information.
\newblock {\em Advances in Artificial Life, ECAL}, abs/1105.0158.

\bibitem[Barandiaran et~al., 2009]{barandiaran_defining_2009}
Barandiaran, X.~E., Paolo, E.~D., and Rohde, M. (2009).
\newblock Defining agency: Individuality, normativity, asymmetry, and
  spatio-temporality in action.
\newblock {\em Adaptive Behavior}, 17(5):367--386.

\bibitem[Bartlett and Bullock, 2015]{bartlett_emergence_2015}
Bartlett, S. and Bullock, S. (2015).
\newblock Emergence of {Competition} between {Different} {Dissipative}
  {Structures} for the {Same} {Free} {Energy} {Source}.
\newblock In {\em Proceedings of the {European} {Conference} on {Artificial}
  {Life}}, pages 415--422. The MIT Press.

\bibitem[Beer, 1995]{beer_dynamical_1995}
Beer, R.~D. (1995).
\newblock A dynamical systems perspective on agent-environment interaction.
\newblock {\em Artificial Intelligence}, 72(1-2):173--215.

\bibitem[Beer, 2014a]{beer_characterizing_2014}
Beer, R.~D. (2014a).
\newblock Characterizing {autopoiesis} in the {game} of {life}.
\newblock {\em Artificial Life}, 21(1):1--19.

\bibitem[Beer, 2014b]{beer_cognitive_2014}
Beer, R.~D. (2014b).
\newblock The cognitive domain of a glider in the game of life.
\newblock {\em Artificial Life}, 20(2):183--206.

\bibitem[Bertschinger et~al., 2008]{bertschinger_autonomy_2008}
Bertschinger, N., Olbrich, E., Ay, N., and Jost, J. (2008).
\newblock Autonomy: An information theoretic perspective.
\newblock {\em Biosystems}, 91(2):331--345.

\bibitem[Biehl et~al., 2016]{biehl_towards_2016}
Biehl, M., Ikegami, T., and Polani, D. (2016).
\newblock Towards information based spatiotemporal patterns as a foundation for
  agent representation in dynamical systems.
\newblock In {\em Proceedings of the {Artificial} {Life} {Conference} 2016},
  pages 722--729. The MIT Press.

\bibitem[Biehl et~al., 2017]{biehl_specific_2017}
Biehl, M., Ikegami, T., and Polani, D. (2017).
\newblock Specific and {Complete} {Local} {Integration} of {Patterns} in
  {Bayesian} {Networks}.
\newblock {\em Entropy}, 19(5):230.

\bibitem[Biehl, 2017]{biehl_formal_2017}
Biehl, M.~A. (2017).
\newblock {\em Formal {Approaches} to a {Definition} of {Agents}}.
\newblock PhD thesis, University of Hertfordshire, Hatfield.

\bibitem[Froese et~al., 2014]{froese_motility_2014}
Froese, T., Virgo, N., and Ikegami, T. (2014).
\newblock Motility at the origin of life: Its characterization and a model.
\newblock {\em Artificial Life}, 20(1):55--76.

\bibitem[Ikegami and Taiji, 1998]{ikegami_uncertainty_1998}
Ikegami, T. and Taiji, M. (1998).
\newblock Uncertainty, possible worlds and coupled dynamical recognizers.
\newblock
  \url{http://sacral.c.u-tokyo.ac.jp/pdf/ikegami_polytechnica_1998.pdf}.

\bibitem[Klyubin et~al., 2005]{klyubin_empowerment_2005}
Klyubin, A., Polani, D., and Nehaniv, C. (2005).
\newblock Empowerment: a universal agent-centric measure of control.
\newblock In {\em The 2005 {IEEE} Congress on Evolutionary Computation, 2005},
  volume~1, pages 128--135 Vol.1.

\bibitem[McGregor, 2017]{mcgregor_bayesian_2017}
McGregor, S. (2017).
\newblock The {Bayesian} stance: {Equations} for ‘as-if’ sensorimotor
  agency.
\newblock {\em Adaptive Behavior}, page 105971231770050.

\bibitem[Schmickl et~al., 2016]{schmickl_how_2016}
Schmickl, T., Stefanec, M., and Crailsheim, K. (2016).
\newblock How a life-like system emerges from a simplistic particle motion law.
\newblock {\em Scientific Reports}, 6:37969.

\bibitem[Tishby and Polani, 2011]{tishby_information_2011}
Tishby, N. and Polani, D. (2011).
\newblock Information {Theory} of {Decisions} and {Actions}.
\newblock In Cutsuridis, V., Hussain, A., and Taylor, J.~G., editors, {\em
  Perception-{Action} {Cycle}}, Springer {Series} in {Cognitive} and {Neural}
  {Systems}, pages 601--636. Springer New York.
\newblock DOI: 10.1007/978-1-4419-1452-1\_19.

\bibitem[Virgo, 2011]{virgo_thermodynamics_2011}
Virgo, N. (2011).
\newblock {\em Thermodynamics and the Structure of Living Systems}.
\newblock University of Sussex.
\newblock Unpublished {PhD} thesis.

\bibitem[Wilson and Shpall, 2012]{sep-action}
Wilson, G. and Shpall, S. (2012).
\newblock Action.
\newblock In Zalta, E.~N., editor, {\em The Stanford Encyclopedia of
  Philosophy}. Summer 2012 edition.

\bibitem[Zahedi and Ay, 2013]{zahedi_quantifying_2013}
Zahedi, K. and Ay, N. (2013).
\newblock Quantifying morphological computation.
\newblock {\em Entropy}, 15(5):1887--1915.
\newblock {arXiv:1301.6975} [cs, math].

\end{thebibliography}

\end{document}